\documentclass[10pt]{article} 

\usepackage[preprint]{rlj} 

\usepackage{amssymb}            
\usepackage{mathtools}          
\usepackage{mathrsfs}           
\usepackage{graphicx}           
\usepackage{subcaption}         
\usepackage[space]{grffile}     
\usepackage{url}                
\usepackage{lipsum}             
\usepackage{amsthm}
\usepackage{booktabs}
\usepackage{multirow}
\newtheorem{theorem}{Theorem}
\newtheorem{assumption}{Assumption}
\newtheorem{corollary}{Corollary}
\newtheorem{remark}{Remark}
\newtheorem{lemma}{Lemma}
\newtheorem{example}{Example}

\title{Escaping Offline Pessimism: Vector-Field Reward Shaping for Safe Frontier Exploration}

\setrunningtitle{Escaping Offline Pessimism: Vector-Field Reward Shaping for Safe Frontier Exploration}

\author{Amirhossein Roknilamouki\textsuperscript{1} \quad Arnob Ghosh\textsuperscript{2} \quad Eylem Ekici\textsuperscript{1} \quad Ness B. Shroff\textsuperscript{1,3}}

\emails{\{roknilamouki.1, ekici.2, shroff.11\}@osu.edu, arnob.ghosh@njit.edu}

\affiliations{
\textsuperscript{1}Department of Electrical and Computer Engineering, The Ohio State University \hspace{1em}
\textsuperscript{2}Department of Electrical and Computer Engineering, New Jersey Institute of Technology \hspace{1em}
\textsuperscript{3}Department of Computer Science and Engineering, The Ohio State University.
}

\contribution{
     This paper studies how to pre-train an offline RL policy such that, when deployed online as a fixed policy, it can both complete its primary task and safely gather informative data near the boundary of known regions.
    }
    {
    Unlike safe RL methods that rely on iterative online policy updates and explicit safety certification during interaction, our setting focuses on stationary deployment with no online policy adaptation.
    }
    \contribution{
    We propose a vector-field exploratory reward built on a pre-trained uncertainty oracle, combining a gradient-alignment term that attracts the agent toward a target uncertainty level with a rotational-flow term that promotes motion along the local tangent plane of the corresponding uncertainty manifold(Figure~\ref{fig:exp_a_vf_method_particles}).
    }
    {
    As illustrated in Figure~\ref{fig:exp_a_bump_method_particles}, without the rotational-flow  component, the agent may exhibit degenerate behavior and achieve only limited exploration.
    }

    \contribution{
    We provide theoretical analysis showing that the proposed reward shaping concentrates behavior near the target uncertainty manifold while encouraging tangential mobility along the boundary.
    }
    {
   The analysis explains why the proposed reward encourages the desired boundary exploration behavior, but it does not provide a full convergence guarantee for deep RL training with function approximation.
    }
    
  \contribution{
    We evaluate the proposed reward on a 2D navigation task with regions where the simulator is unreliable relative to the real world, and show that when combined with Soft Actor-Critic, the learned policy approaches these regions and explores their boundary while avoiding transitions that move deeply into the unreliable region.
    }
    {
    The experiment is intended as a proof-of-concept demonstration of the induced behavior rather than a broad empirical benchmark across tasks or algorithms.
    }

\keywords{offline reinforcement learning, safe exploration, reward shaping, stationary deployment, uncertainty-aware exploration}

\summary{While offline reinforcement learning provides reliable policies for real-world deployment, its inherent pessimism severely restricts an agent's ability to explore and collect novel data online. Drawing inspiration from safe reinforcement learning, exploring near the boundary of regions well covered by the offline dataset and reliably modeled by the simulator allows an agent to take manageable risks—venturing into informative but moderate-uncertainty states while remaining close enough to familiar regions for safe recovery. However, naively rewarding this boundary-seeking behavior can lead to a degenerate ``parking'' behavior, where the agent simply stops once it reaches the frontier. To solve this, we propose a novel vector-field reward shaping paradigm designed to induce continuous, safe boundary exploration for non-adaptive deployed policies. Operating on an uncertainty oracle trained from offline data, our reward combines two complementary components: a gradient-alignment term that attracts the agent toward a target uncertainty level, and a rotational-flow term that promotes motion along the local tangent plane of the uncertainty manifold. 
}

\keywords{offline reinforcement learning, safe exploration, reward shaping,  fixed-policy deployment, uncertainty-aware exploration}

\begin{document}

\maketitle  

\begin{abstract}
While offline reinforcement learning provides reliable policies for real-world deployment, its inherent pessimism severely restricts an agent's ability to explore and collect novel data online. Drawing inspiration from safe reinforcement learning, exploring near the boundary of regions well covered by the offline dataset and reliably modeled by the simulator allows an agent to take manageable risks—venturing into informative but moderate-uncertainty states while remaining close enough to familiar regions for safe recovery. However, naively rewarding this boundary-seeking behavior can lead to a degenerate ``parking'' behavior, where the agent simply stops once it reaches the frontier. To solve this, we propose a novel vector-field reward shaping paradigm designed to induce continuous, safe boundary exploration for non-adaptive deployed policies. Operating on an uncertainty oracle trained from offline data, our reward combines two complementary components: a gradient-alignment term that attracts the agent toward a target uncertainty level, and a rotational-flow term that promotes motion along the local tangent plane of the uncertainty manifold. Through theoretical analysis, we show that this reward structure naturally induces sustained exploratory behavior along the boundary while preventing degenerate solutions.  Empirically, by integrating our proposed reward shaping with Soft Actor-Critic on  a 2D continuous navigation task, we validate that agents successfully traverse uncertainty boundaries while balancing safe, informative data collection with primary task completion.
\end{abstract}



\section{Introduction}
\label{sec:introduction}
Deep reinforcement learning (RL) has achieved remarkable success in simulated domains, yet training agents from scratch in real-world, safety-critical environments—such as autonomous driving or robotics—remains prohibitively dangerous. Unconstrained online exploration frequently leads to catastrophic failures, such as high-speed collisions or unrecoverable system states. To mitigate these risks, offline RL provides a powerful paradigm: learning a reliable policy entirely from previously collected datasets prior to deployment \citep{levine2020offline}. However, learning solely from static data introduces the severe challenge of out-of-distribution (OOD) actions. To prevent the agent from executing unseen and potentially hazardous behaviors, offline methods typically enforce strict pessimism, either by heavily penalizing uncertain actions \citep{kumar2020conservative} or by rigidly confining the learned policy to the support of the training data \citep{kostrikov2021offline}.

While this pessimism is crucial for ensuring initial safety, it introduces a paradoxical challenge for subsequent improvement. A natural progression in RL is to deploy the pre-trained offline agent to collect targeted online data for fine-tuning. Yet, the very pessimism that ensures safety severely paralyzes the agent's ability to explore novel states during this online phase \citep{mark2022fine}. Conversely, completely overriding this pessimism to force standard exploration immediately reintroduces the risk of real-world catastrophes. Safe RL literature offers a conceptual bridge to this paradox. By incorporating cost constraints alongside traditional reward signals, Safe RL restricts exploration to a recoverable, safe neighborhood around the initial policy's coverage \citep{roknilamouki2025provably,pacchiano2025contextual, shi2023near,  ghosh2022provably, pacchiano2021stochastic}. Much like a newly licensed driver who cautiously tests a car's handling limits without pushing into dangerous extremes, a safe RL agent seeks out "reasonable risks"—regions of the state space where the agent is uncertain, but the potential for harm remains strictly bounded and recoverable.

Unfortunately, translating these Safe RL paradigms to modern, high-dimensional deep reinforcement learning presents a fundamental conflict regarding online policy updates. Traditional Safe RL operates via a rigorous, iterative loop: it initializes with a known safe baseline policy, executes highly conservative exploration within a trusted local neighborhood, and uses the newly collected samples to update the policy. Crucially, the safety of this newly updated policy is mathematically guaranteed prior to execution using tight estimation bounds—typically derived from linear models, tabular representations, or Gaussian Processes \citep{berkenkamp2023bayesian}. Modern deep RL relies on highly nonlinear neural networks that lack such tractable, well-calibrated bounds. Consequently, updating a deep RL policy online is inherently dangerous; a single gradient step on newly acquired data can unpredictably degrade performance and lead to catastrophic real-world failures. In practice, establishing the safety of deep RL models requires exhaustive offline hyperparameter tuning and rigorous evaluation across diverse scenarios—a luxury that is impossible during live, on-the-fly interaction \citep{huang2024non}.

To overcome these limitations, we propose shifting the responsibility of safe exploration entirely to the offline pre-training phase. Instead of relying on dangerous online updates to learn new behaviors, we pre-train a policy that is already designed to take manageable risks. Specifically, we train the policy offline to accomplish two goals simultaneously: (1) successfully complete the primary task, and (2) use any remaining time in the episode to actively collect data from regions just outside the coverage of the offline dataset or simulator. By keeping this exploration strictly close to well-known regions, the agent can safely expand its knowledge without venturing into unrecoverable, out-of-distribution territory. Because this training happens entirely offline, researchers can thoroughly test and verify the safety of these exploratory behaviors before the agent ever interacts with the real world.

This approach provides a highly practical and safe solution for real-world deployment. During the online phase, the policy's parameters remain completely frozen. By using a fixed policy, we avoid the unpredictable dangers of online fine-tuning, while the agent's pre-trained exploratory behavior still allows it to harvest valuable new data at the edges of its knowledge. This safely acquired data can then be used in a later offline phase—such as to enrich datasets, improve simulators, or train better reward models. While recent literature has begun to explore theoretical frameworks for this kind of fixed, offline-to-online data collection \citep{huang2024non, zhang2023policy}, developing a method that actually induces this behavior in scalable deep RL architectures remains a critical open challenge. This naturally motivates the central question of our paper:

\begin{center}
\textit{How can we pre-train a policy during the offline phase such that, when deployed as a fixed policy online, it reliably completes its primary task while utilizing any remaining time in the episode to safely collect informative data from regions just outside the established coverage?}
\end{center}

However, designing a reward signal that reliably induces this behavior is nontrivial. A natural idea is to encourage the agent to move toward regions of higher uncertainty by assigning larger rewards to states near the boundary of the known region. Intuitively, such a reward landscape concentrates incentives near this frontier, where informative samples can be collected while remaining close to recoverable states. However, purely state-based intrinsic rewards of this form often lead to degenerate behaviors. Because the reward is maximized at particular boundary points, the agent may simply ``park'' at a single high-reward location rather than traverse the entire frontier. As a result, the resulting state visitations collapse into a small set of modes instead of spreading along the boundary (See figure~\ref{fig:exp_a_bump_method_particles}). Similar concentration effects have been observed in state-visitation matching objectives, where directly maximizing a target density can lead to mode collapse \citep{lee2019efficient}. Consequently, inducing sustained exploration along a boundary requires a reward mechanism that not only attracts the agent toward the frontier but also promotes motion along it.

To solve this, we introduce a novel \textbf{vector-field exploratory reward} designed specifically to induce safe, continuous boundary exploration. Operating on an uncertainty oracle trained from offline data, our reward design comprises two synergistic elements:

\textbf{1. Gradient Alignment:} A potential-based component that encourages the agent to follow the uncertainty gradient toward the target boundary level.

 \textbf{2. Rotational Flow:} A novel intrinsic signal that projects the agent's movement along the local tangent plane of the uncertainty manifold. Once the agent reaches the target boundary, this rotational element prevents degenerate "parking" and pushes the agent to continuously traverse the safe frontier.

Through theoretical analysis, we prove that this vector-field reward naturally induces the desired safe exploratory behavior. We evaluate our method on a 2D navigation task containing regions where the simulator is unreliable relative to the real world. When combined with Soft Actor-Critic (SAC) \citep{haarnoja2018soft}, the resulting policy exhibits the desired safe exploratory behavior: the agent approaches these regions and explores their boundary while avoiding transitions that move deeply into the unreliable region.

\textbf{Related Work.} Our work touches upon several areas of exploration and safe RL, but departs from them in crucial ways to support scalable, stationary deployment:
\begin{itemize}
\item \textbf{State-Visitation and Density Matching:}
A closely related line of work encourages exploration by matching the agent's state-visitation distribution to a target distribution. Methods such as State Marginal Matching (SMM) \citep{lee2019efficient} formulate this objective as a two-player game and have shown promising results in guiding exploration. However, achieving accurate coverage typically requires maintaining an online estimate of the state marginal distribution and, in practice, sampling from a collection of previously trained policies. We refer the reader to Remark~\ref{remark:smm_challenges} for a discussion of the relationship between this framework and our approach. Moreover, another related work in this category is \citet{hazan2019provably}.
\item \textbf{Informative Sampling for Offline RL:} Works such as \citet{zhang2023policy, huang2024non} focus on designing non-adaptive policies from offline data to gather informative samples. However, these methods are largely analytical, and focus on tabular MDPs. 
\item \textbf{Reward-Free and Sim-to-Real Exploration:} Methods exploring reward-free learning \citep{wagenmaker2022reward} or offline-to-online exploration \citep{wagenmaker2024overcoming}  assume linear function approximation. Moreover, they do not penalize the exploratory policy for entering unsafe regions.
\end{itemize}

\section{Problem Formulation}

We consider a true, real-world Markov Decision Process (MDP) defined by the tuple $\mathcal{M}^* = (\mathcal{S},\mathcal{A},P^*,r^*)$, where the state space $\mathcal{S}\subset \mathbb{R}^d$ is compact, and the reward function is bounded such that $|r^*(s,a)|\le R_{\max}$.  Prior to online fine-tuning, we assume the agent undergoes a pre-training phase within a simulated MDP, $\widehat{\mathcal{M}} = (\mathcal{S}, \mathcal{A}, \widehat{P}, \widehat{r})$. This simulator is constructed based on a previously collected offline dataset $\mathcal{D}$. Due to finite data coverage and modeling approximations, there exists an inevitable sim-to-real gap between the simulated and true environments.  To quantify this gap, let $\Delta(s)$ capture the true discrepancy between the simulator's dynamics and reward functions and those of the real world at a given state $s$. We formulate this as the worst-case divergence over the action space:
\begin{equation}
\label{eq:uncertainty_oracle}
    \Delta(s) := \sup_{a\in\mathcal{A}} \left[ d\left(\widehat{P}(\cdot|s,a), P^*(\cdot|s,a)\right) + \lambda \left| \widehat{r}(s,a) - r^*(s,a) \right| \right],
\end{equation}
where $d(\cdot, \cdot)$ denotes a suitable probability metric (e.g., Total Variation distance or Kullback-Leibler divergence) \citep{lattimore2020bandit}, and $\lambda > 0$ is a weighting coefficient.  Because $P^*$ and $r^*$ are unknown during pre-training, the true discrepancy $\Delta(s)$ cannot be computed exactly. Instead, we assume access to a twice continuously differentiable uncertainty oracle, $U:\mathcal{S}\to\mathbb{R}$. To ensure the reliability of this estimation for safe deployment, we formally assume that $U(s)$ acts as a well-calibrated, conservative upper bound on the true sim-to-real gap, such that $\Delta(s) \le U(s)$ for all $s \in \mathcal{S}$. In practice, such pessimistic upper bounds are typically constructed via the epistemic uncertainty of the simulator or via the inverse state density of the underlying dataset $\mathcal{D}$ \citep{uehara2405bridging, burda2018exploration, fu2017ex2}. We utilize this continuous upper bound $U(s)$ as our operational measure of data coverage and simulator reliability.

During the online deployment and fine-tuning phase, unconstrained exploration can push the agent into high-uncertainty, out-of-distribution (OOD) regions where the sim-to-real gap is large. Because the pre-trained policy and value functions severely degrade in these regions, venturing too far from the dataset coverage often results in unrecoverable states \citep{zhouefficient, nakamoto2023cal, kumar2020conservative}. Conversely, strictly confining the agent to regions with zero uncertainty halts exploration entirely. Drawing inspiration from the safe reinforcement learning literature, which emphasizes the necessity of taking manageable, bounded risks to safely expand the exploration frontier \citep{kitamura2025provably, roknilamouki2025provably, shi2023near, ghosh2022provably}, we propose a controlled exploration strategy. Specifically, we fix a target uncertainty level $U_{\mathrm{mid}}\in\mathbb{R}$. By targeting $U_{\mathrm{mid}}$, we encourage the agent to visit states that represent a manageable risk---novel enough to facilitate the discovery of new strategies and the collection of informative data to iteratively improve the simulator, yet close enough to known regions to ensure safe recovery. We formalize this target exploration manifold as the level set: \(
    \mathcal{U} := \{s\in\mathcal{S} : U(s)=U_{\mathrm{mid}}\}\). In practice, $U_{\mathrm{mid}}$ is a user-defined hyperparameter reflecting the risk tolerance of the deployment task. It can be calibrated based on the known degradation bounds of the offline pre-trained policy by setting $U_{\mathrm{mid}}$ strictly below the uncertainty threshold where the sim-to-real gap leads to unrecoverable performance loss.




\section{Proposed Approach (Vector Field-Guided Reward Shaping)}
\label{sec:approach}
Having established the target manifold $\mathcal{U}$, our objective is to train a policy within the offline simulator such that, upon deployment in the online phase, it simultaneously completes its primary task and actively collects informative samples from this boundary. To build intuition for the problem setting and the desired deployment behavior, we consider the following illustrative example.

\begin{example}[Navigation with Localized Uncertainty]
\label{Example:Navigation}
Figure~\ref{fig:toy_nav}(a) illustrates a 2D navigation task in which a point robot starts from a fixed location and must navigate to a goal region. The environment dynamics are accurately modeled across most of the state space, except within a designated \emph{uncertain} region (shown as the red shaded area), where the simulator becomes unreliable and entering may lead to poor prediction quality or potentially irrecoverable outcomes. In contrast, we seek a deployed policy that, before completing the primary task, safely approaches the uncertain region and moves along its boundary, the target manifold $\mathcal{U}$ (green region), to gather informative data while avoiding transitions into the unsafe interior.
\end{example}

\begin{figure}[t]
    \centering
    \includegraphics[width=0.41\linewidth]{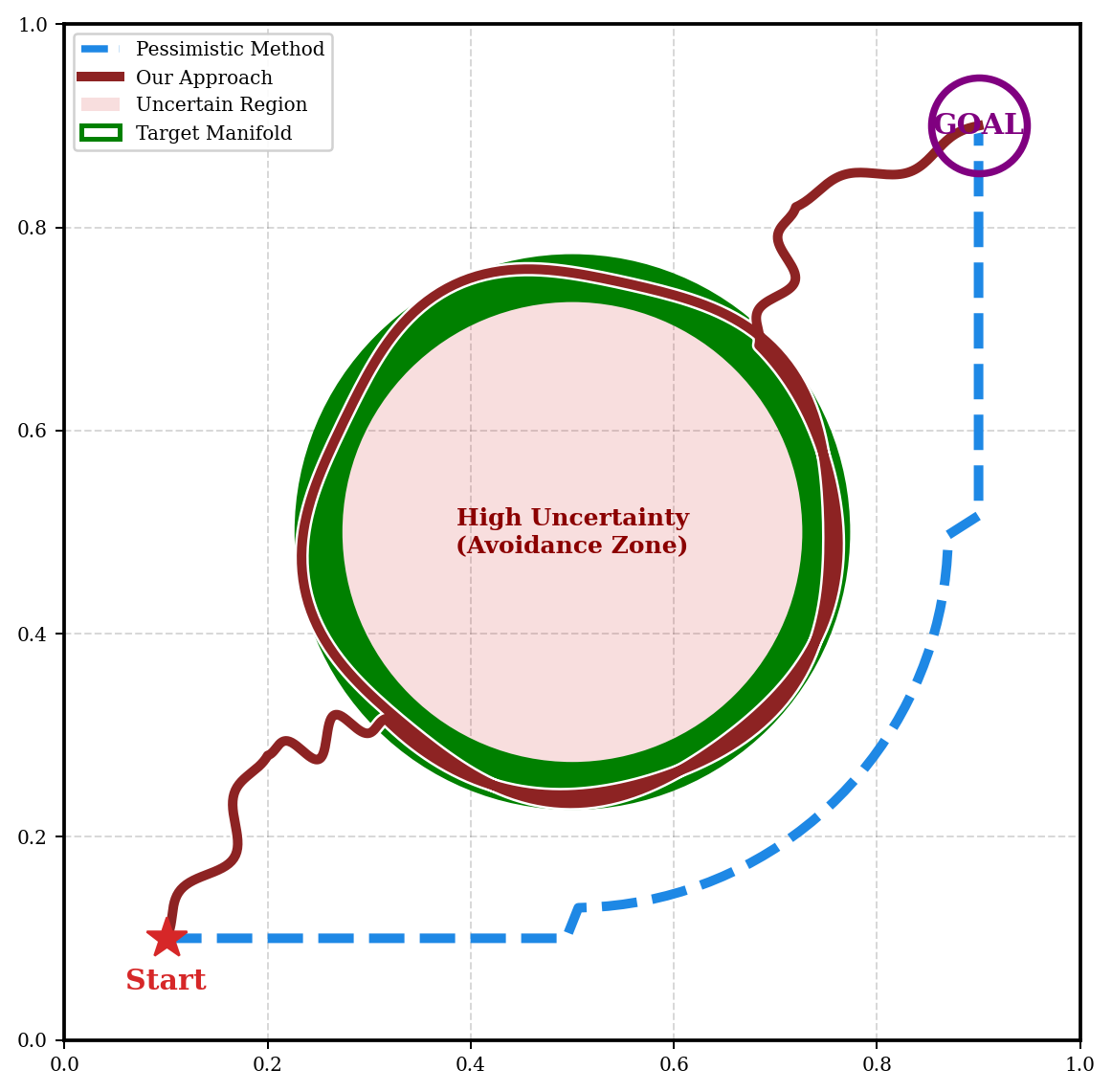}\hfill
    \includegraphics[width=0.41\linewidth]{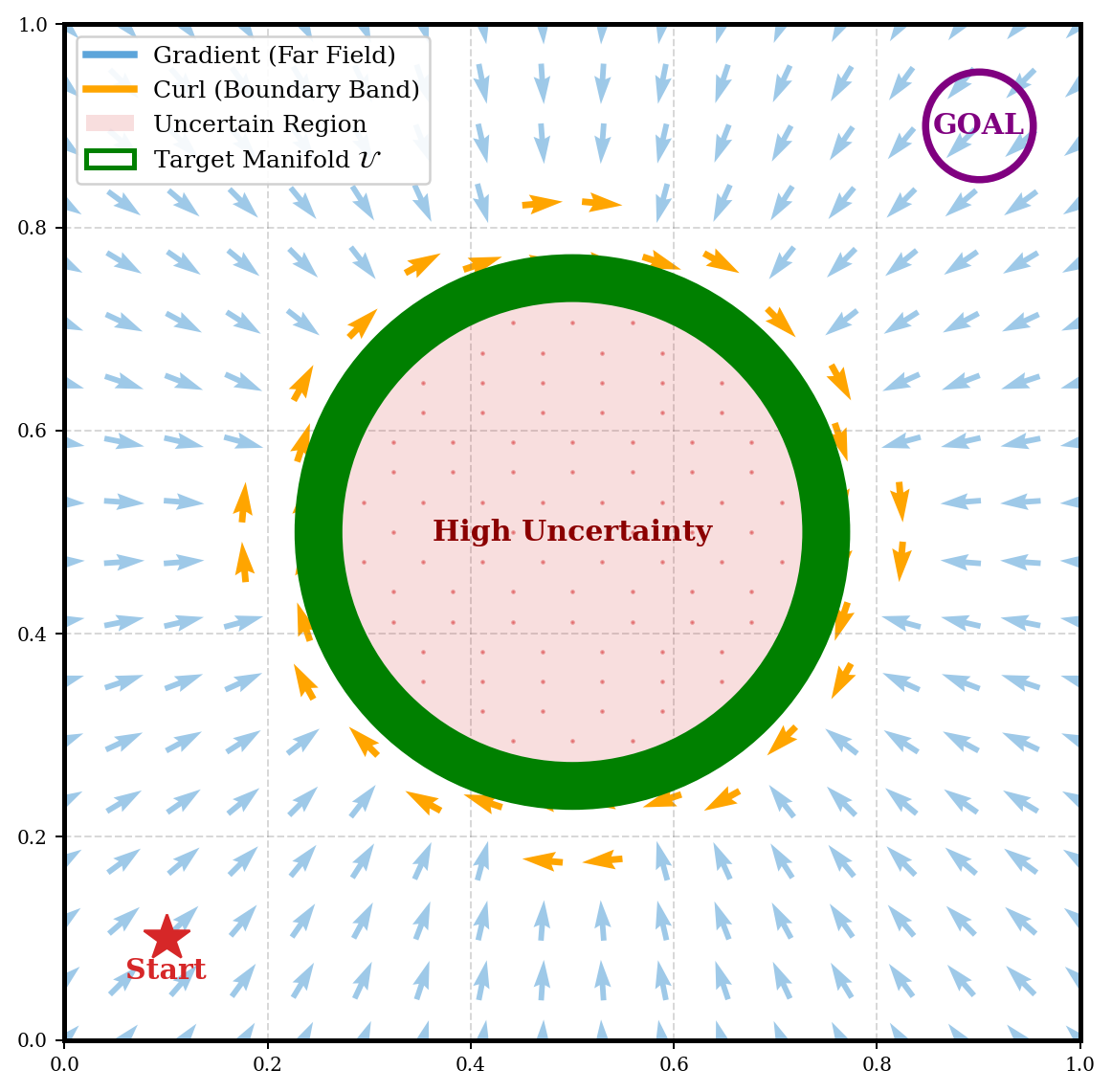}
    \caption{\textbf{Toy navigation with localized uncertainty.}
    \textbf{(a)} Example trajectories from the same start/goal. A pessimistic baseline (blue, dashed) takes a conservative detour to avoid the high-uncertainty region. Our approach drives the agent to the target manifold $\mathcal{U}$ (green), rotates along the boundary to collect informative samples, and then reaches the goal without entering the uncertain interior (shaded).
    \textbf{(b)} Visualization of the induced vector field for controlled exploration. The blue arrows  direct the agent toward the high-uncertainty regions, seeking the target uncertainty boundary $U_{\mathrm{mid}}$. Upon reaching this target level, the orange arrows (Curl Boundary Band) induce a tangential flow, ensuring the agent continuously moves along and explores the surface of the manifold.}
    \label{fig:toy_nav}
\end{figure}


To achieve the dual objectives of task completion and safe boundary exploration, we augment the base task reward $r(s,a)$ with a shaping term $r'(s,a,s')$, such that the total reward is $\tilde{r}(s,a,s') := r(s,a) + r'(s,a,s')$. For a state transition $s \xrightarrow{a} s'$, we define the agent's step vector as $\Delta_s := s' - s$. Our proposed shaping reward is formulated as follows:
\begin{equation}
\label{eq:rewardDesign}
\begin{aligned}
    r'(s,a,s') := \underbrace{\alpha(s)\langle \nabla U(s),\Delta_s\rangle}_{\text{Gradient Alignment}} + \underbrace{\beta(s)\langle W \nabla U(s), \Delta_s \rangle}_{\text{Rotational Flow}}
\end{aligned}
\end{equation}
At a high level, $r'$ generates a vector field over the continuous state space. This field is explicitly designed to attract the RL agent toward the uncertain region until it reaches the target safety level $U_{\mathrm{mid}}$, and subsequently encourages the agent to continuously move around the target manifold $\mathcal{U}$ (See Figure~\ref{fig:toy_nav}~(b)). We explain the mechanics of this equation step by step below.

\textbf{Gradient Alignment (Attraction):} The first element of our reward design acts as an attractive force to guide the policy toward the target manifold. The state-dependent scalar weight $\alpha(s) := \mathrm{sign}(U_{\mathrm{mid}}-U(s))\tanh(|U(s)-U_{\mathrm{mid}}|)$ dictates both the strength and direction of this alignment. When the agent is in a highly explored state where $U(s) < U_{\mathrm{mid}}$, $\alpha(s)$ is positive. This rewards the agent for taking a step $\Delta_s$ along the direction of the gradient $\nabla U(s)$, locally increasing its uncertainty. Conversely, if the agent breaches the boundary into highly unsafe, out-of-distribution regions where $U(s) > U_{\mathrm{mid}}$, $\alpha(s)$ becomes negative. This elegantly reverses the vector field, pushing the agent in the direction of $-\nabla U(s)$ to reduce uncertainty and safely guide it back to the recoverable boundary.

\textbf{Rotational Flow (Surface Exploration):} Once the agent reaches the target manifold, it is crucial that it does not simply stop exploring or "park" at a single boundary state. To prevent this, the second element of $r'$ induces a continuous, directed rotational flow. We isolate movement along the manifold by constructing a target vector field that is strictly tangent to $\mathcal{U}$. This is achieved by applying a constant skew-symmetric matrix $W$ (where $W^\top = -W$) to the gradient. Because the quadratic form of a skew-symmetric matrix is always zero ($\nabla U(s)^\top W \nabla U(s) = 0$), the resulting vector $W \nabla U(s)$ is mathematically guaranteed to be orthogonal to the gradient, thus pointing purely along the level set \citep{kapitanyuk2017guiding}. We measure the agent's step $\Delta_s$ against this rotational field using the inner product $\langle W \nabla U(s), \Delta_s \rangle$. The scalar $\beta(s) := 1-|\tanh(U(s)-U_{\mathrm{mid}})|$ acts as a gating mechanism that peaks when the agent is exactly on the manifold ($U(s) \approx U_{\mathrm{mid}}$). By rewarding this alignment, the agent is heavily incentivized to continuously "surf" along the manifold in a stable orbit, maximizing state coverage without penetrating deeper into unsafe regions. It is important to note that for state-space dimensions $d > 2$, the choice of the skew-symmetric matrix $W$ is not unique, as higher-dimensional spaces contain multiple orthogonal planes of rotation. 


One might naturally wonder if the gradient alignment term alone is sufficient to induce exploration once the manifold is reached. However, a purely gradient-based field cannot sustain continuous boundary traversal, a limitation we formalize through fundamental vector calculus:

\begin{lemma}[Insufficiency of Gradient Fields for Boundary Exploration]
\label{Lemma:necc_rot}
\textit{A reward function consisting solely of a gradient field yields zero net exploratory return for any trajectory confined to the target manifold $\mathcal{U}$, or for any closed loop.}
\end{lemma}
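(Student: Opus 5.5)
We read the ``gradient field'' reward as a continuous-time idealization of the shaping term: the per-step reward $\langle \nabla \Phi(s), \Delta_s\rangle$ for a $C^1$ potential $\Phi:\mathcal{S}\to\mathbb{R}$. The canonical case is $\Phi = U$ or any function of $U$. For example, $\Phi = G(U)$ with $G' = \alpha$ reproduces the gradient-alignment term with $\alpha$ depending on $s$ only through $U(s)$.

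The plan is to pass from sums of steps to a line integral. Along a trajectory $s_0,s_1,\dots,s_T$ with small steps, the accumulated shaping return $\sum_t \langle \nabla\Phi(s_t), s_{t+1}-s_t\rangle$ is a Riemann sum for $\int_\gamma \nabla\Phi\cdot d\gamma$. Here $\gamma$ is the piecewise-$C^1$ path interpolating the states. Its error is $O(\sum_t \|\Delta_{s_t}\|^2)$, using that $U\in C^2$ on a compact set, so $\nabla\Phi$ is Lipschitz. In the exact discrete version one can instead use the potential-based form $\Phi(s')-\Phi(s)$, for which everything below telescopes with no error.

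The main step is the gradient theorem (fundamental theorem of calculus for line integrals):
\[
\int_\gamma \nabla\Phi\cdot d\gamma = \int_0^1 \tfrac{d}{dt}\Phi(\gamma(t))\,dt = \Phi(\gamma(1))-\Phi(\gamma(0)).
\]
The two cases of the statement then follow directly. For a closed loop, $\gamma(1)=\gamma(0)$, so the return is zero. Equivalently, by Stokes' theorem $\nabla\Phi$ is curl-free, so its circulation vanishes. For a trajectory confined to $\mathcal{U}$, $U(\gamma(t))\equiv U_{\mathrm{mid}}$, so for $\Phi=G(U)$ we get $\Phi(\gamma(1))=\Phi(\gamma(0))$. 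A pointwise argument gives the same conclusion: differentiating $U(\gamma(t))=U_{\mathrm{mid}}$ yields $\langle\nabla U(\gamma(t)),\dot\gamma(t)\rangle=0$, so the integrand vanishes identically. With the paper's $\alpha$, this is reinforced by $\alpha(s)=0$ on $\mathcal{U}$. In either case, no ordering or extent of movement along $\mathcal{U}$ earns reward, so ``parking'' is exactly as good as traversal.

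The main obstacle is pinning down what ``gradient field'' means for the paper's state-dependent weight. The field $\alpha(s)\nabla U(s)$ is a gradient field only when $\alpha$ factors through $U$. I would state this explicitly; it holds for the defined $\alpha$, and the antiderivative $G$ exists because $\alpha\circ U$ is continuous. I would also note that the discrete-step case holds only up to the $O(\|\Delta_s\|^2)$ discretization error, unless the reward is taken in telescoping form. Finally, I would remark on the contrast: the rotational term $W\nabla U$ is generally not a gradient field, because its curl is nonzero. It is also tangent to $\mathcal{U}$, which is why it can yield positive circulation along the manifold.
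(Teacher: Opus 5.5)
Your proposal is correct and follows the same route as the paper: the fundamental theorem of line integrals reduces the gradient-field return to an endpoint difference, which vanishes for closed loops and for paths on which $U\equiv U_{\mathrm{mid}}$. Your additional care is a sharpening of that argument rather than a different route: you restrict the manifold case to potentials of the form $G(U)$ (the paper's proof implicitly takes $\Phi=U$, and for a general potential the on-manifold claim fails), and you track the $O(\sum_t\|\Delta_{s_t}\|_2^2)$ discretization error that the paper's continuous-time argument ignores.
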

\textit{Proof.} Let $\gamma(t)$ for $t \in [0, T]$ be a continuous trajectory taken by the agent. By the fundamental theorem of line integrals \citep{spivak2018calculus}, the total reward accumulated from traversing a purely gradient field $\nabla U$ depends entirely on the endpoints: $\int_{\gamma} \nabla U(s) \cdot ds = U(\gamma(T)) - U(\gamma(0))$. If the agent perfectly navigates along the target manifold $\mathcal{U}$, the uncertainty remains constant, meaning $U(\gamma(T)) = U(\gamma(0)) = U_{\mathrm{mid}}$. Consequently, the net reward integral evaluates to exactly zero. $\hfill \blacksquare$

 Lemma~\ref{Lemma:necc_rot} necessitates our introduction of the rotational flow term, which breaks the conservative nature of the field and actively rewards the continuous traversal of the boundary.

\section{Analysis}
\label{sec:analysis}

In this section, we provide a theoretical analysis of the vector field reward shaping mechanism $\tilde{r}(s,a,s')$ introduced in Section~\ref{sec:approach}. Specifically, we demonstrate that in the long-term average, the gradient alignment term acts as a near-potential-based shaper \citep{ng1999policy} that cancels out, leaving the agent to optimize a combination of the base task reward and the rotational flow along the target manifold $\mathcal{U}$.   To guarantee these theoretical properties, we first establish a set of standard, mild assumptions about the environment dynamics, the structure of the uncertainty oracle $U$, and the feasibility of the target manifold.

\begin{assumption}[Regularity of the Target Level]
\label{assum:regularity}
The target value $U_{\mathrm{mid}}$ is a \emph{regular value} of $U$. Specifically, there exists $g_0 > 0$ such that $\|\nabla U(s)\|\ge g_0$ for all $s$ in a neighborhood of $\mathcal{U}$.
\end{assumption}
Assumption~\ref{assum:regularity} ensures our target set $\mathcal{U}$ is a well-behaved, smooth $(d-1)$-dimensional embedded manifold without ``pinched'' points or singularities \citep{lee2003smooth}. In particular, it guarantees that $\nabla U(s) \neq 0$ near the manifold, ensuring that the rotational field $W\nabla U(s)$ remains non-vanishing.

\begin{assumption}[Ergodicity and Average Reward]
\label{assum:ergodicity}
For every stationary policy $\pi$, the induced Markov chain is ergodic and admits a unique stationary distribution $\mu_\pi$. Consequently, the average reward $\rho^\pi := \lim_{T\to\infty}\frac{1}{T}\mathbb{E}_\pi\left[\sum_{t=0}^{T-1}\tilde r(s_t,a_t,s_{t+1})\right]$ always exists.
\end{assumption}
 Assumption~\ref{assum:ergodicity} is a standard infinite-horizon RL assumption. It allows us to evaluate policies based on their long-term stationary behavior, safely washing out the transient effects of initial state distributions \citep{levin2017markov}.

\begin{assumption}[Existence of a Manifold-Covering Policy]
\label{assum:manifold_policy}
There exist constants $v_{\max} < \infty$ and $v_0 > 0$, alongside a stationary policy $\pi_{\mathcal{U}}$, such that:
\begin{enumerate}
    \item[(i)] $\mu_{\pi_{\mathcal{U}}}(\mathcal{U}) = 1$ (the manifold is invariant under $\pi_{\mathcal{U}}$).
    \item[(ii)] $\mathbb{E}_{\pi_{\mathcal{U}}}[\langle W \nabla U(s), \Delta_s \rangle] \ge v_0$, while for all policies $\pi$, $\mathbb{E}_{\pi}[|\langle W \nabla U(s), \Delta_s \rangle|] \le v_{\max}$.
\end{enumerate}
\end{assumption}
Assumption~\ref{assum:manifold_policy} should be viewed as a controllability-type condition. It posits that the environment admits at least one stationary policy that can stay on the manifold $\mathcal{U}$ while maintaining nontrivial tangential motion in the direction induced by $W\nabla U(s)$. This is natural in environments where the agent can steer locally along the level set while correcting drift in the normal direction. Our navigation experiment in Section~\ref{sec:experiments} provides a concrete instance of this setting.


\subsection{Theoretical Results}
\label{sec:no_sticking}
With the formal assumptions established, we now provide rigorous theoretical guarantees for our proposed vector-field reward. The first fundamental requirement of our approach is to ensure that the agent reliably navigates to, and remains tightly concentrated around, the target boundary $\mathcal{U}$. Theorem \ref{thm:manifold_shaping_revised} formalizes this manifold-seeking behavior by demonstrating that the gradient-alignment component of our reward naturally acts as a potential-based shaping mechanism.
\begin{theorem}[Manifold-seeking orthogonal reward shaping]
\label{thm:manifold_shaping_revised}
Let Assumptions \ref{assum:regularity}, \ref{assum:ergodicity}, and \ref{assum:manifold_policy} hold.  Let $\Phi(s) := |U(s) - U_{\mathrm{mid}}|$ and define the scalar potential function  $\Psi(s) := \log\cosh(\Phi(s))$. Let 
$L_\Psi := \sup_{x\in\mathcal{S}}\|\nabla^2\Psi(x)\|_{\mathrm{op}} < \infty$ denote its maximum curvature.  Then the following statements hold:

\textbf{1. Potential-based decomposition up to second-order error.} For all transitions $(s,a,s')$, the normal-seeking component acts as a potential difference:
\begin{equation}
\label{eq:Potential_SecondOrder_error}
    \begin{aligned}
& \alpha(s)\langle \nabla U(s),\Delta_s\rangle = -\big(\Psi(s')-\Psi(s)\big) + \varepsilon(s,a,s')
 \end{aligned}
\end{equation}
where the Taylor remainder is bounded by $|\varepsilon(s,a,s')| \le \frac{L_\Psi}{2}\|\Delta_s\|_2^2$.

\textbf{2. Vanishing potential in stationary average.} For any stationary policy $\pi$, the long-term average shaped reward simplifies to:
\begin{equation}
\label{eq: Vanishing_potnetial}
\rho^\pi = \bar\rho_r^\pi + \mathbb{E}_{s\sim \mu_\pi, a\sim \pi}\Big[\mathbb{E}_{s'}\big[\beta(s)\langle W \nabla U(s), \Delta_s \rangle\big]\Big] + \bar\varepsilon^\pi
\end{equation}
where $\bar\rho_r^\pi$ is the average base reward, and the error satisfies $|\bar\varepsilon^\pi| \le \frac{L_\Psi}{2}\mathbb{E}_{\mu_\pi}[\|\Delta_s\|_2^2]$. In the small-step regime, the objective is shifted \emph{only} by the orthogonal/tangential term.

\textbf{3. Near-manifold concentration.} Fix $\epsilon > 0$ and define the $\epsilon$-off-manifold region $N_\epsilon := \{s : \Phi(s)\ge \epsilon\}$. Let $b_\epsilon := 1-\tanh(\epsilon)$. Any optimal policy $\pi^\star \in \arg\max_\pi \rho^\pi$ satisfies:
\begin{equation}
\label{eq:concentration_Rlc_1}
\mu_{\pi^\star}(N_\epsilon) \le \frac{\big(\bar\rho_r^{\pi^\star}-\bar\rho_r^{\pi_{\mathcal{U}}}\big) + (v_{\max}-v_0) + |\bar\varepsilon^{\pi^\star}| + |\bar\varepsilon^{\pi_{\mathcal{U}}}|}{(1-b_\epsilon)\,v_{\max}}.
\end{equation}

\end{theorem}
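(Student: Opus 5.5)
Part 1 is a first-order Taylor expansion of $\Psi$. I would begin by computing $\nabla\Psi$. Because $\frac{d}{dx}\log\cosh x=\tanh x$ and $\Phi=|U-U_{\mathrm{mid}}|$, we get, wherever $U\neq U_{\mathrm{mid}}$,
\[
\nabla\Psi(s)=\tanh(\Phi(s))\,\mathrm{sign}(U(s)-U_{\mathrm{mid}})\nabla U(s)=-\alpha(s)\nabla U(s).
\]
The composition is actually smooth, since $\log\cosh|x|=\log\cosh x$. So $\Psi(s)=\log\cosh(U(s)-U_{\mathrm{mid}})$ is $C^2$, and this identity also holds on $\mathcal{U}$, where both sides vanish. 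Taylor's theorem with Lagrange remainder along the segment $[s,s']$ then gives
\[
\Psi(s')-\Psi(s)=\langle\nabla\Psi(s),\Delta_s\rangle+\tfrac12\Delta_s^\top\nabla^2\Psi(\xi)\Delta_s .
\]
Substituting $\nabla\Psi=-\alpha\nabla U$ and setting $\varepsilon:=\frac12\Delta_s^\top\nabla^2\Psi(\xi)\Delta_s$ yields \eqref{eq:Potential_SecondOrder_error} with $|\varepsilon|\le\frac{L_\Psi}{2}\|\Delta_s\|^2$. I will assume the segment lies in $\mathcal{S}$ (convexity), or else take $L_\Psi$ over the convex hull of $\mathcal{S}$.

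For Part 2, I sum the decomposition from Part 1 over $t=0,\dots,T-1$. The potential terms telescope to $-(\Psi(s_T)-\Psi(s_0))/T$. This term vanishes as $T\to\infty$ because $\Psi$ is continuous on the compact set $\mathcal{S}$ and hence bounded. The base-reward and rotational terms converge to their stationary expectations by Assumption~\ref{assum:ergodicity}. The error average $\bar\varepsilon^\pi:=\lim\frac1T\mathbb{E}\sum\varepsilon_t$ exists as the difference of limits that exist, and it is bounded by $\frac{L_\Psi}{2}\mathbb{E}_{\mu_\pi}\|\Delta_s\|^2$ through the same ergodic averaging. An equivalent route is to use stationarity directly: $\mathbb{E}_{\mu_\pi}[\Psi(s')-\Psi(s)]=0$.

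For Part 3, I compare $\rho^{\pi^\star}\ge\rho^{\pi_{\mathcal{U}}}$ using \eqref{eq: Vanishing_potnetial} on both sides. For $\pi_{\mathcal{U}}$, invariance gives $\beta\equiv1$ $\mu_{\pi_\mathcal{U}}$-a.s., so its rotational term is at least $v_0$. For $\pi^\star$, I split its rotational term $\mathbb{E}[\beta(s)\langle W\nabla U,\Delta_s\rangle]$ over $N_\epsilon$ and $N_\epsilon^c$. On $N_\epsilon^c$, $\beta\le1$. On $N_\epsilon$, $\beta\le b_\epsilon$. Then I would bound the term by $v_{\max}-(1-b_\epsilon)\,\mathbb{E}[|\langle W\nabla U,\Delta_s\rangle|\mathbf 1_{N_\epsilon}]$.

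To reach the stated denominator $(1-b_\epsilon)v_{\max}\mu(N_\epsilon)$, I need the tangential speed inside $N_\epsilon$ to be essentially saturated at $v_{\max}$. This is the delicate step. My first attempt is to read the $v_{\max}$ bound in Assumption~\ref{assum:manifold_policy}(ii) conditionally, i.e.\ pointwise per state. One can then argue that an optimal policy loses at least $(1-b_\epsilon)v_{\max}$ of achievable rotational reward per unit of mass in $N_\epsilon$, relative to the manifold-saturated benchmark, and that this loss is offset only by task gain and slack. Combining the two sides gives
\[
\bar\rho_r^{\pi_{\mathcal U}}+v_0-|\bar\varepsilon^{\pi_{\mathcal U}}|\le\bar\rho_r^{\pi^\star}+v_{\max}-(1-b_\epsilon)v_{\max}\mu_{\pi^\star}(N_\epsilon)+|\bar\varepsilon^{\pi^\star}|.
\]
Rearranging this gives \eqref{eq:concentration_Rlc_1}. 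The main obstacle is justifying this per-unit-mass loss of $(1-b_\epsilon)v_{\max}$ on $N_\epsilon$. If the pointwise reading fails, I would state it as the interpretation being used, namely that the rotational reward on $N_\epsilon$ is measured against the maximal attainable speed $v_{\max}$.
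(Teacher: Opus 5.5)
Your route is the same as the paper's in all three parts. Part 1 uses $\nabla\Psi=-\alpha\nabla U$ plus a Lagrange remainder. Part 2 telescopes $\Psi$ under ergodicity. Part 3 compares $\rho^{\pi^\star}\ge\rho^{\pi_{\mathcal{U}}}$, using $\beta\le b_\epsilon$ on $N_\epsilon$ and $\beta\equiv 1$ under $\pi_{\mathcal{U}}$. Your observation that $\Psi=\log\cosh(U-U_{\mathrm{mid}})$ is $C^2$ across $\mathcal{U}$ fills in something the paper only asserts.

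The step you flag in Part 3 is not actually a saturation issue. Write $X=\langle W\nabla U(s),\Delta_s\rangle$. The apparent need for a lower bound on speed inside $N_\epsilon$ comes from first collapsing to $\mathbb{E}|X|-(1-b_\epsilon)\mathbb{E}[|X|\mathbf{1}_{N_\epsilon}]$. Instead, adopt the per-state reading $\mathbb{E}[|X|\mid s]\le v_{\max}$ and condition on $s$, using $\beta\ge 0$. This gives $\mathbb{E}[\beta(s)X]\le v_{\max}\,\mathbb{E}_{\mu_{\pi^\star}}[\beta(s)]\le v_{\max}\bigl(1-(1-b_\epsilon)\mu_{\pi^\star}(N_\epsilon)\bigr)$. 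That is exactly the inequality the paper writes, and it uses only upper bounds.

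The paper justifies this line merely by citing $\beta\le 1$ and Assumption~\ref{assum:manifold_policy}(ii). Under the literal, unconditional $\mathbb{E}_\pi|X|\le v_{\max}$, it does not follow. Nothing rules out a policy with zero tangential speed on $N_\epsilon$ and all of its speed where $\beta\approx 1$, which would violate the bound. So the paper implicitly relies on the same per-state interpretation you propose, and stating it explicitly, as you suggest, is the right fix.
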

\begin{remark}[Behavioral Trade-offs in Near-Manifold Concentration]

Equation~\ref{eq:concentration_Rlc_1} makes the behavioral trade-offs of our reward design fully explicit. 
If we ignore the second-order discretization errors ($|\tilde{\varepsilon}^{\pi^*}|$ and $|\tilde{\varepsilon}^{\pi_U}|$), 
the upper bound on the off-manifold state visitation $\mu_{\pi^*}(\mathcal{N}_\epsilon)$ is driven primarily by two structural terms:

\begin{itemize}

\item \textbf{Base Reward Temptation:} 
The first term, $\frac{\bar{\rho}_r^{\pi^*} - \bar{\rho}_r^{\pi_U}}{v_{\max}}$, captures the maximum additional task reward the optimal policy might gain by venturing off the manifold. 
Since $v_{\max} \ge v_0$, this fraction is strictly bounded above by 
$\frac{\bar{\rho}_r^{\pi^*} - \bar{\rho}_r^{\pi_U}}{v_0}$. 
Recall that $v_0$ lower-bounds the expected rotational reward of the strictly on-manifold policy,\(
\mathbb{E}_{\pi_{\mathcal{U}}}\!\left[ \beta(s)\langle W \nabla U(s), \Delta_s \rangle\right] \ge v_0.
\) Therefore, if the policy can move rapidly along the boundary (i.e., $v_0$ is large), this term becomes small.  Furthermore, if the base task reward difference is naturally small, the ``temptation'' to leave the manifold is negligible, resulting in a tighter concentration bound.

\item \textbf{Tangential Velocity Gap:} 
The second fraction, $\frac{v_{\max} - v_0}{v_{\max}}$, reflects the relative gap in exploration speed.  If the on-manifold policy ($\pi_{\mathcal{U}}$) can traverse the boundary rapidly, its guaranteed tangential speed $v_0$ approaches the global maximum possible speed $v_{\max}$, effectively driving this difference to zero.

\end{itemize}

Ultimately, this bound demonstrates that if the agent can accumulate high rotational rewards (i.e., move fast) strictly on the manifold, these tangential incentives overwhelmingly dominate any marginal task-reward advantages found off-manifold, mathematically guaranteeing tight spatial concentration around $\mathcal{U}$.

\end{remark}


\begin{proof}
The proof of Eq.~\eqref{eq:Potential_SecondOrder_error} is deferred to Appendix~\ref{app:proof_claim1}. We now proceed to prove Eqs.~\eqref{eq: Vanishing_potnetial} and~\eqref{eq:concentration_Rlc_1}.

\textbf{Step 1: Average-reward limit (Proof of Eq.~\eqref{eq: Vanishing_potnetial}).}
Fix a stationary policy $\pi$. Summing $-(\Psi(s_{t+1})-\Psi(s_t))$ over $t=0,\dots,T-1$ telescopes to $\Psi(s_0)-\Psi(s_T)$. Since $\Psi$ is bounded on the compact set $\mathcal{S}$, taking the expectation and dividing by $T$ yields $\lim_{T\to\infty}\frac{1}{T}\mathbb{E}[\Psi(s_0)-\Psi(s_T)]=0$. 
Thus, under Assumption \ref{assum:ergodicity}, the potential difference vanishes in the limit. Substituting the result of Eq.\eqref{eq:Potential_SecondOrder_error} into the definition of $\tilde{r}$ leaves only the base reward, the tangential term, and the stationary expected Taylor remainder $\bar\varepsilon^\pi$, proving Claim 2.

\textbf{Step 2: Near-manifold concentration (Proof of Eq.~\eqref{eq:concentration_Rlc_1}).}
Fix $\epsilon>0$. On the off-manifold region $N_\epsilon$, $w(s) = \tanh(\Phi(s)) \ge \tanh(\epsilon)$, which implies $\beta(s) = 1-w(s) \le 1-\tanh(\epsilon) = b_\epsilon$. Because $\beta(s)\le 1$ everywhere and expected tangential movement is bounded by $v_{\max}$ (Assumption \ref{assum:manifold_policy}), we can bound the tangential reward for any policy $\pi$:
\begin{align*}
\mathbb{E}\big[\beta(s)\langle W \nabla U(s), \Delta_s \rangle\big] &\le v_{\max}\Big( (1-\mu_\pi(N_\epsilon))\cdot 1 + \mu_\pi(N_\epsilon)\cdot b_\epsilon \Big) \\
&= v_{\max}\Big(1-(1-b_\epsilon)\mu_\pi(N_\epsilon)\Big).
\end{align*}
Applying this to the optimal policy $\pi^\star$ via Claim 2:
\[
\rho^{\pi^\star} \le \bar\rho_r^{\pi^\star} + v_{\max}\Big(1-(1-b_\epsilon)\mu_{\pi^\star}(N_\epsilon)\Big) + |\bar\varepsilon^{\pi^\star}|.
\]
Conversely, for the manifold-invariant policy $\pi_{\mathcal{U}}$ from Assumption \ref{assum:manifold_policy}, $\Phi(s)=0$ almost surely. Thus $\beta(s)=1$ a.s., yielding:
\[
\rho^{\pi_{\mathcal{U}}} \ge \bar\rho_r^{\pi_{\mathcal{U}}} + v_0 - |\bar\varepsilon^{\pi_{\mathcal{U}}}|.
\]
By optimality, $\rho^{\pi^\star} \ge \rho^{\pi_{\mathcal{U}}}$. Chaining these inequalities and rearranging for $\mu_{\pi^\star}(N_\epsilon)$ directly yields the bound in Eq.~\eqref{eq:concentration_Rlc_1}.
\end{proof}

\textbf{Tangential Mobility and No-Sticking Guarantees.} While Theorem~\ref{thm:manifold_shaping_revised} guarantees that the agent reaches and remains concentrated around the target manifold $\mathcal{U}$, this alone is insufficient for effective data collection. A common failure mode in boundary-seeking reinforcement learning is \emph{degenerate parking}: the agent reaches a safe boundary state and stops moving—often via a self-loop policy—to avoid the risk of out-of-distribution penalties. To rule out this behavior, we show that the rotational component of our shaped reward prevents such degeneracy. The following corollary formalizes this result: 

\begin{corollary}[No-sticking on the manifold (in expectation)]
\label{cor:no_sticking}
Let the setting of Theorem~\ref{thm:manifold_shaping_revised} hold and suppose Assumption~\ref{assum:manifold_policy} holds with constants $v_0>0$ and $v_{\max}<\infty$.
Further assume that the base reward of the manifold-supported policy $\pi_{\mathcal{U}}$ is near-optimal among all policies supported on $\mathcal{U}$, in the sense that for some tolerance $\eta\ge 0$, \(
\bar\rho_r^{\pi_{\mathcal{U}}}
\;\ge\;
\sup_{\pi:\mu_\pi(\mathcal{U})=1}\bar\rho_r^\pi
\;-\;\eta.
\) Then any average-reward optimal policy $\pi^\star\in\arg\max_\pi \rho^\pi$ that satisfies $\mu_{\pi^\star}(\mathcal{U})=1$ must achieve strictly positive expected tangential motion:
\[
\mathbb{E}_{s\sim\mu_{\pi^\star}}\mathbb{E}_{a,s'}\!\left[\beta(s)\langle W \nabla U(s), \Delta_s \rangle\right]
\;\ge\;
v_0-\eta.
\]
In particular, if $\eta < v_0$, then $\pi^\star$ cannot induce a self-loop ($s'=s$ almost surely) on a set of stationary mass $1$.
\end{corollary}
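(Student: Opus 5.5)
The plan is to compare $\pi^\star$ directly against the manifold-covering policy $\pi_{\mathcal{U}}$ of Assumption~\ref{assum:manifold_policy}, using the average-reward decomposition of Claim~2 of Theorem~\ref{thm:manifold_shaping_revised}. Write $T^\pi := \mathbb{E}_{s\sim\mu_\pi}\mathbb{E}_{a,s'}[\beta(s)\langle W\nabla U(s),\Delta_s\rangle]$ for the tangential term. Then Eq.~\eqref{eq: Vanishing_potnetial} gives $\rho^\pi = \bar\rho_r^\pi + T^\pi + \bar\varepsilon^\pi$ for every stationary $\pi$. As in the remark following the theorem, I will work in the regime where the second-order remainders $\bar\varepsilon^\pi$ are neglected. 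Alternatively, one can carry them along, which yields $v_0-\eta-|\bar\varepsilon^{\pi^\star}|-|\bar\varepsilon^{\pi_{\mathcal{U}}}|$, and the exact statement holds when these vanish.

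\textbf{Step 1: lower-bound the comparator.} Since $\mu_{\pi_{\mathcal{U}}}(\mathcal{U})=1$, we have $\Phi(s)=0$ and hence $\beta(s)=1$ for $\mu_{\pi_{\mathcal{U}}}$-a.e.\ $s$. Assumption~\ref{assum:manifold_policy}(ii) then gives $T^{\pi_{\mathcal{U}}}\ge v_0$, so $\rho^{\pi_{\mathcal{U}}}\ge \bar\rho_r^{\pi_{\mathcal{U}}}+v_0$.

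\textbf{Step 2: use optimality and the on-manifold hypothesis.} By optimality, $\rho^{\pi^\star}\ge\rho^{\pi_{\mathcal{U}}}$, which means $\bar\rho_r^{\pi^\star}+T^{\pi^\star}\ge \bar\rho_r^{\pi_{\mathcal{U}}}+v_0$. Because $\mu_{\pi^\star}(\mathcal{U})=1$, the policy $\pi^\star$ lies in the class over which the supremum in the near-optimality hypothesis is taken. Hence $\bar\rho_r^{\pi^\star}\le \sup_{\pi:\mu_\pi(\mathcal{U})=1}\bar\rho_r^\pi \le \bar\rho_r^{\pi_{\mathcal{U}}}+\eta$. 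Substituting this gives $T^{\pi^\star}\ge v_0-\eta$, which is the claim.

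\textbf{Step 3: the self-loop consequence.} Suppose $s'=s$ almost surely under $\mu_{\pi^\star}\otimes\pi^\star\otimes P$. Then $\Delta_s=0$, the integrand vanishes identically, and $T^{\pi^\star}=0$. This contradicts $T^{\pi^\star}\ge v_0-\eta>0$ whenever $\eta<v_0$. Note that a self-loop also makes $\|\Delta_s\|^2=0$, so $\bar\varepsilon^{\pi^\star}=0$ there, which slightly simplifies the error bookkeeping.

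The main obstacle is the error terms $\bar\varepsilon$. They are only controlled by $\tfrac{L_\Psi}{2}\mathbb{E}[\|\Delta_s\|^2]$, so the clean constant $v_0-\eta$ requires either the small-step idealization or an explicit degradation of the bound. I would state the result with the remainders made explicit and then note that they vanish in the small-step limit. A secondary technicality is that the integrand must be integrable so that the expectations are well defined; this is supplied by the $v_{\max}$ bound in Assumption~\ref{assum:manifold_policy}(ii) together with $0\le\beta\le1$.
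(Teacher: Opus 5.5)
Your skeleton matches the paper's: you compare $\pi^\star$ with $\pi_{\mathcal U}$, invoke $\rho^{\pi^\star}\ge\rho^{\pi_{\mathcal U}}$, bound $\bar\rho_r^{\pi^\star}$ by the supremum over on-manifold policies, and note that a self-loop forces $\Delta_s=0$. The gap is in how you handle the gradient-alignment term. You route through Claim~2 of Theorem~\ref{thm:manifold_shaping_revised}, which brings in the remainders $\bar\varepsilon^{\pi^\star}$ and $\bar\varepsilon^{\pi_{\mathcal U}}$, and you treat them as genuinely nonzero. As written, your argument therefore either discards them by fiat, which is not a proof, or yields only $T^{\pi^\star}\ge v_0-\eta-|\bar\varepsilon^{\pi^\star}|-|\bar\varepsilon^{\pi_{\mathcal U}}|$. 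That weaker bound does not give the self-loop conclusion under $\eta<v_0$ alone. The self-loop kills $\bar\varepsilon^{\pi^\star}$, but your bookkeeping controls $|\bar\varepsilon^{\pi_{\mathcal U}}|$ only by $\frac{L_\Psi}{2}\mathbb{E}_{\mu_{\pi_{\mathcal U}}}\|\Delta_s\|_2^2$. That quantity is strictly positive, because $\pi_{\mathcal U}$ must move to achieve tangential speed $v_0>0$.

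The missing observation is that both policies being compared are supported on $\mathcal U$, where $\alpha(s)=\mathrm{sign}(0)\tanh(0)=0$ and $\beta(s)=1$. So for any $\pi$ with $\mu_\pi(\mathcal U)=1$, the gradient-alignment term vanishes $\mu_\pi$-a.s. Under Assumption~\ref{assum:ergodicity}, the decomposition $\rho^\pi=\bar\rho_r^\pi+T^\pi$ is then exact, with no Taylor expansion and no small-step regime. In your own bookkeeping this reads $\bar\varepsilon^\pi=\mathbb{E}_{\mu_\pi}[\alpha(s)\langle\nabla U(s),\Delta_s\rangle]+\mathbb{E}_{\mu_\pi}[\Psi(s')-\Psi(s)]=0$. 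The first term vanishes because $\alpha\equiv 0$ on $\mathcal U$. The second vanishes by stationarity, since $s'\sim\mu_\pi$ when $s\sim\mu_\pi$ and $\Psi$ is bounded. Hence $\bar\varepsilon^{\pi^\star}=\bar\varepsilon^{\pi_{\mathcal U}}=0$ for every step size.

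This is exactly what the paper does. It writes $\tilde r(s,a,s')=r(s,a)+\beta(s)\langle W\nabla U(s),\Delta_s\rangle$ on $\mathcal U$ and runs your Steps 1--3 with exact equalities. Replace your appeal to Claim~2 with that line and drop the ``main obstacle'' paragraph. The bound $T^{\pi^\star}\ge v_0-\eta$ and the self-loop consequence then hold exactly as stated.
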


\begin{proof}
On the target manifold $\mathcal{U}$, the shaping weights satisfy $\alpha(s)=0$ and $\beta(s)=1$, hence
\[
\tilde r(s,a,s') = r(s,a) + \beta(s)\langle W \nabla U(s), \Delta_s \rangle.
\]
By Assumption~\ref{assum:manifold_policy}(i)--(ii), the invariant policy $\pi_{\mathcal{U}}$ satisfies $\mu_{\pi_{\mathcal{U}}}(\mathcal{U})=1$ and
\[
\rho^{\pi_{\mathcal{U}}}
=\bar\rho_r^{\pi_{\mathcal{U}}}+
\mathbb{E}_{\pi_{\mathcal{U}}}\!\left[\beta(s)\langle W \nabla U(s), \Delta_s \rangle\right]
\;\ge\; \bar\rho_r^{\pi_{\mathcal{U}}}+ v_0.
\]
Let $\pi^\star$ be average-reward optimal among policies supported on $\mathcal{U}$ (i.e., $\mu_{\pi^\star}(\mathcal{U})=1$). Optimality implies
$\rho^{\pi^\star}\ge \rho^{\pi_{\mathcal{U}}}$, and expanding $\rho^{\pi^\star}$ on $\mathcal{U}$ yields
\[
\bar\rho_r^{\pi^\star}+
\mathbb{E}_{\pi^\star}\!\left[\beta(s)\langle W \nabla U(s), \Delta_s \rangle\right]
\;\ge\;
\bar\rho_r^{\pi_{\mathcal{U}}}+ v_0.
\]
Rearranging gives
\[
\mathbb{E}_{\pi^\star}\!\left[\beta(s)\langle W \nabla U(s), \Delta_s \rangle\right]
\;\ge\;
v_0 - (\bar\rho_r^{\pi_{\mathcal{U}}}-\bar\rho_r^{\pi^\star}).
\]
Finally, the stated base-reward condition implies $\bar\rho_r^{\pi^\star}\le \bar\rho_r^{\pi_{\mathcal{U}}}+\eta$, hence
\[
\mathbb{E}_{\pi^\star}\!\left[\beta(s)\langle W \nabla U(s), \Delta_s \rangle\right]
\;\ge\;
v_0-\eta,
\]
as claimed. If $\eta<v_0$, the right-hand side is strictly positive, ruling out an almost-sure self-loop on a full-mass stationary set.
\end{proof}

\section{Experiments}
\label{sec:experiments}
We evaluate our method on the continuous navigation task introduced in Example~\ref{Example:Navigation} and illustrated in Figure~\ref{fig:toy_nav}. Our primary objective is to demonstrate that a policy trained within an offline simulator can, upon deployment, simultaneously complete its primary task while actively collecting informative samples from the mid-level uncertainty boundary $\mathcal{U}$. To systematically analyze the agent's behavior and decouple exploration from exploitation, we structure our experiments into two distinct scenarios:

\begin{itemize}\item \textbf{Pure Exploration (Extrinsic Reward Removed):} In the first scenario, we completely zero out the task-specific environment reward. This isolates the effects of our shaping reward, allowing us to observe the agent's pure exploratory dynamics and verify whether it successfully adheres to and circulates along the manifold $\mathcal{U}$ without the pull of an external goal.\item \textbf{Goal-Directed Exploration (Extrinsic Reward Active):} In the second scenario, we reintroduce the standard environment reward, which incentivizes the agent to navigate toward a designated goal region. This tests the policy's ability to balance the exploratory behavior induced by our vector-field reward with standard task performance.
\end{itemize}

Throughout these navigation scenarios, we benchmark our proposed method against an intrinsic reward that assigns higher value to states with greater estimated uncertainty. We compare the policies learned under these two reward designs to evaluate their resulting exploration behavior. Finally, we conclude by discussing the limitations of our current methodology and outlining interesting open problems for future research.

\textbf{Implementation Details.} We base our algorithm on Soft Actor-Critic (SAC) \citep{haarnoja2018soft, haarnoja2018soft_2}, utilizing a Normalizing Flow \citep{ghugare2025normalizing}  to parameterize the actor network. We use 32 flow blocks for the purely exploratory task and increase this to 64 blocks for the task that includes the environment reward. The actor is trained using a constant learning rate of $1 \times 10^{-6}$. All environments and training routines are implemented natively in JAX and executed on an NVIDIA H200 GPU.

\subsection{Analysis of Exploratory Behavior}\label{subsec:exploration_results}
We first study the exploratory behavior induced by the intrinsic reward. To isolate this effect, we remove the extrinsic reward for reaching the goal region, so the agent's objective is driven entirely by the exploratory signal. We then compare two approaches under this purely exploratory setting:

\textbf{(i) Vector-field exploratory reward (Ours).} In this run, we train the agent in simulation using our proposed vector-field reward design introduced in Eq.~\eqref{eq:rewardDesign}. Because the state space of this simulation is two-dimensional, we instantiate the rotational flow by setting the skew-symmetric matrix to the standard symplectic matrix $W = \begin{bmatrix} 0 & 1 \\ -1 & 0 \end{bmatrix}$.  Figure~\ref{fig:four_experiments} reports three diagnostics of the learned behavior.
Panel (a) visualizes empirical state visitations during an episode, illustrating the spatial coverage of the policy.
Panel (b) measures the agent's speed along the target manifold, while panel (c) reports the rate of unsafe transitions. During training, the agent gradually transitions from radial motion toward tangential motion around the target uncertainty level. 
This behavior is visible in the visitation map (panel (a)), where the states form a continuous band around the uncertainty boundary, and in panel (b), where the speed along the manifold increases as training progresses. Consistent with Corollary~\ref{cor:no_sticking}, the policy does not stall after reaching the manifold. 
Instead, the rotational component of the vector field induces a persistent orbit around the boundary, producing well-distributed coverage of the exploration frontier while maintaining a low unsafe rate (panel (c)).


\begin{figure*}[t] 
    \centering
    \begin{subfigure}[b]{0.3\textwidth}
        \centering
        \includegraphics[width=\textwidth]{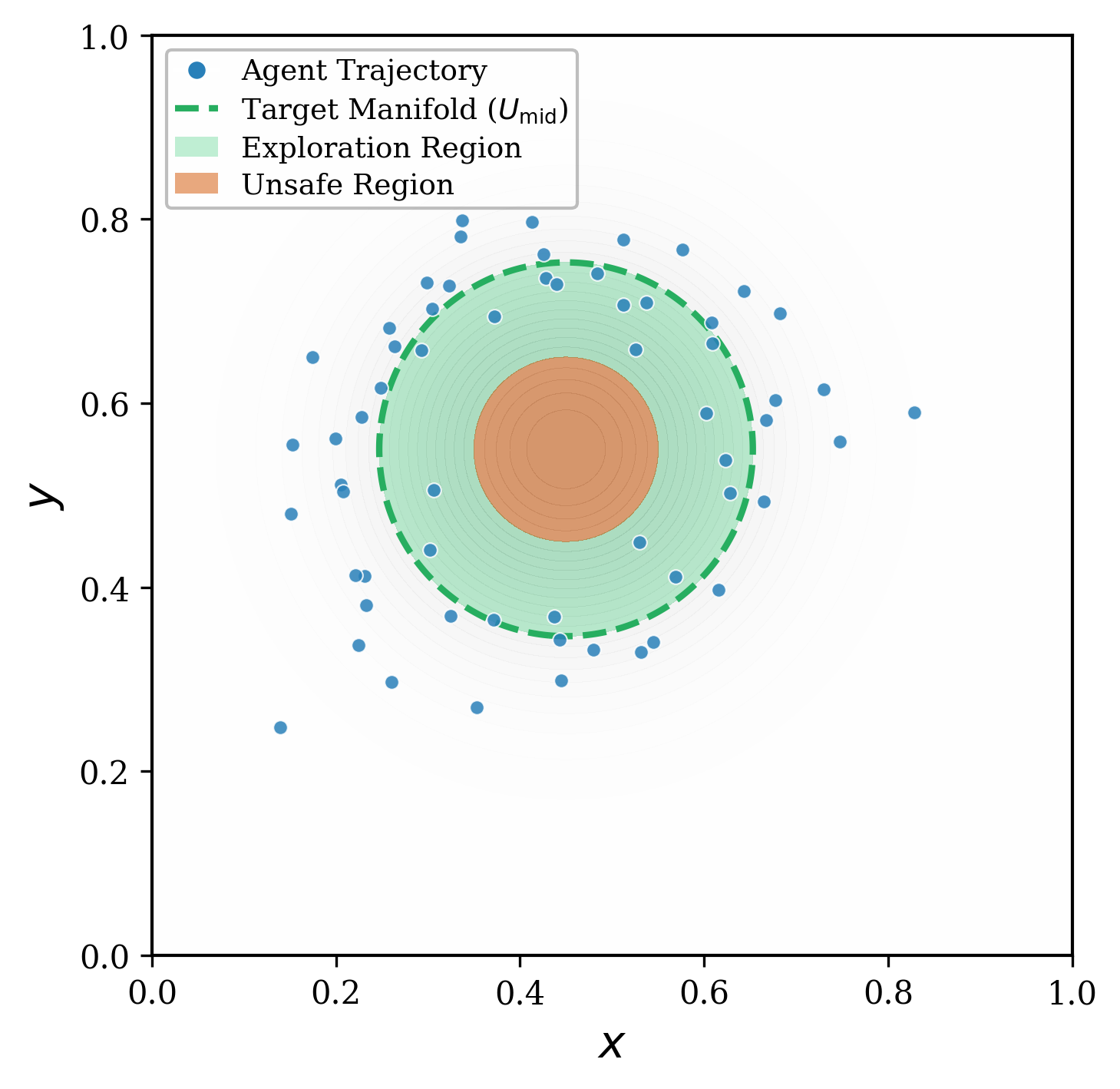} 
        \caption{Exploration behavior}
        \label{fig:exp_a_vf_method_particles}
    \end{subfigure}
    \hfill
    \begin{subfigure}[b]{0.3\textwidth}
        \centering
        \includegraphics[width=\textwidth]{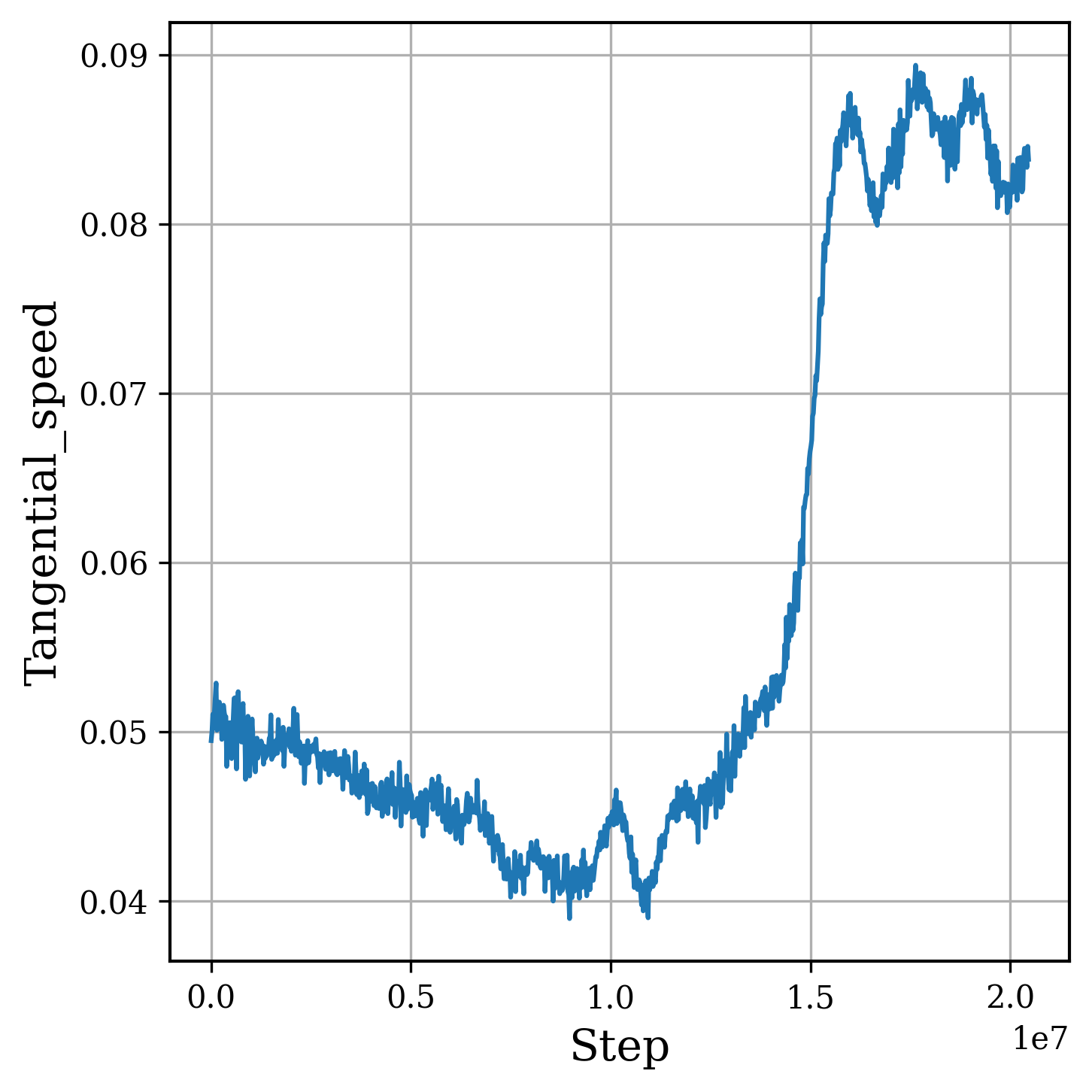}
        \caption{Speed on the Manifold}
        \label{fig:exp_b}
    \end{subfigure}
    \hfill
    \begin{subfigure}[b]{0.3\textwidth}
        \centering
        \includegraphics[width=\textwidth]{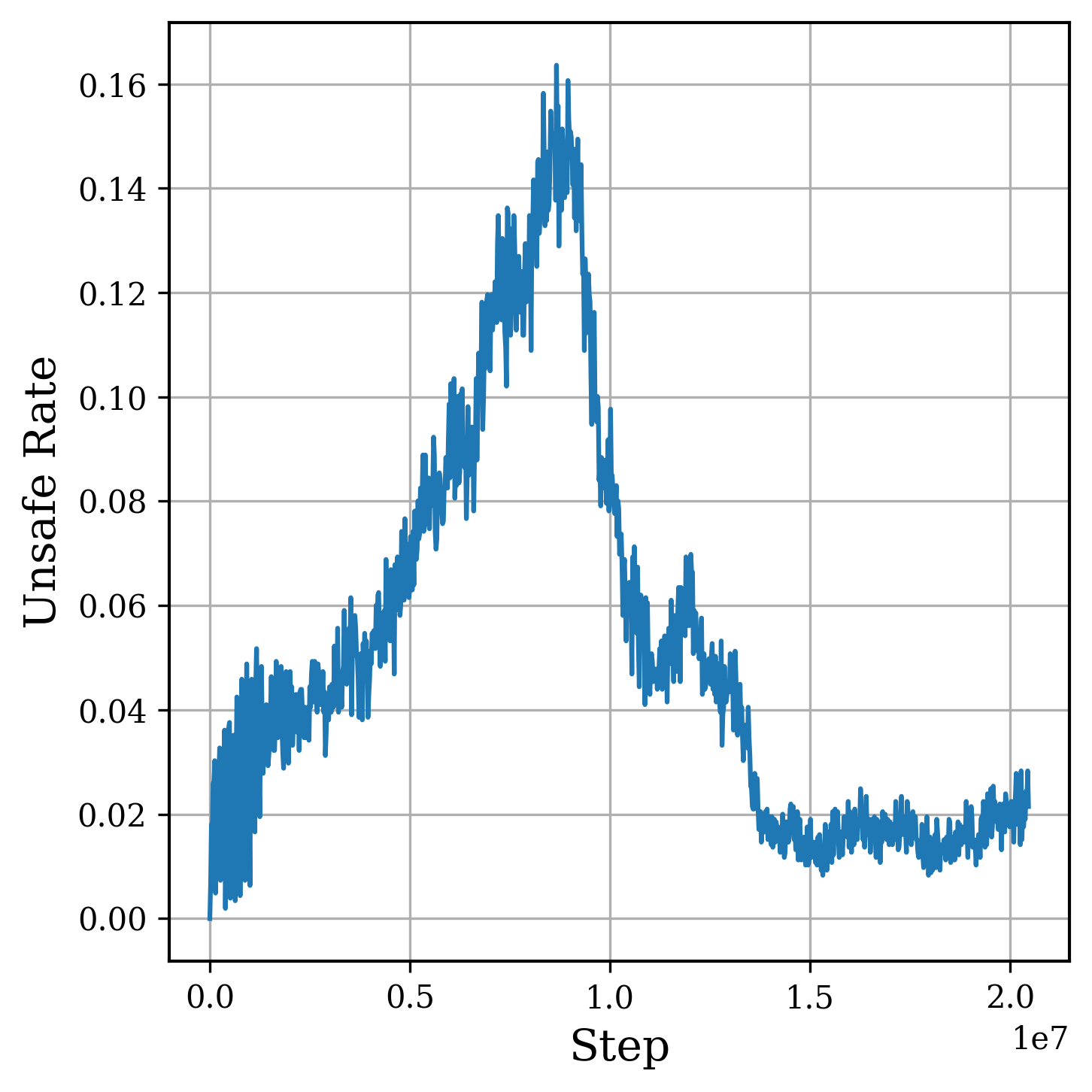}
        \caption{Unsafe Behavior Rate}
        \label{fig:exp_d}
    \end{subfigure}
    \caption{Our vector field reward design results in a periodic orbit behavior (the agent rotate around the uncertainty region to collect as much data as it can)}
    \label{fig:four_experiments}
\end{figure*}

\textbf{(ii) Purely State-Based Intrinsic Reward Baseline.} To highlight the necessity of the rotational flow, we construct a baseline that directly incentivizes boundary exploration via a purely state-based intrinsic reward, $r'(s,a,s') = U(s')$. To discourage the agent from entering unrecoverable out-of-distribution regions, we augment this reward with a safety penalty \(r_{\mathrm{pen}}(s,a,s') := -\lambda_{\mathrm{unsafe}} \cdot \mathbf{1}\{U(s') > U_{\mathrm{mid}} + \epsilon\}\), where \(\epsilon > 0\) and \(\lambda_{\mathrm{unsafe}} \gg 1\) is a large penalty coefficient.  One can view this reward structure as encouraging the agent to match a target distribution $p^*(s)$ that is heavily concentrated on the target manifold $\mathcal{U}$, conceptually similar to the objective in the State Marginal Matching (SMM) framework \citep{lee2019efficient}, except that we remove the term involving the policy's state visitation distribution. Empirically, we observe that this static reward landscape behaves as a point attractor. As shown in Figure~\ref{fig:four_experiments_bump_reward}, the agent's state visitations (blue points) collapse into a highly localized cluster on the target manifold rather than spreading along the exploration frontier. This phenomenon is consistent with the observations of \citet{lee2019efficient}, who note that optimizing the unnormalized log-likelihood of a target distribution can lead to mode collapse. Their framework addresses this issue by incorporating a term involving the policy's state visitation distribution, which we discuss in Remark~\ref{remark:smm_challenges}.

\begin{figure*}[t] 
    \centering
    \begin{subfigure}[b]{0.3\textwidth}
        \centering
        \includegraphics[width=\textwidth]{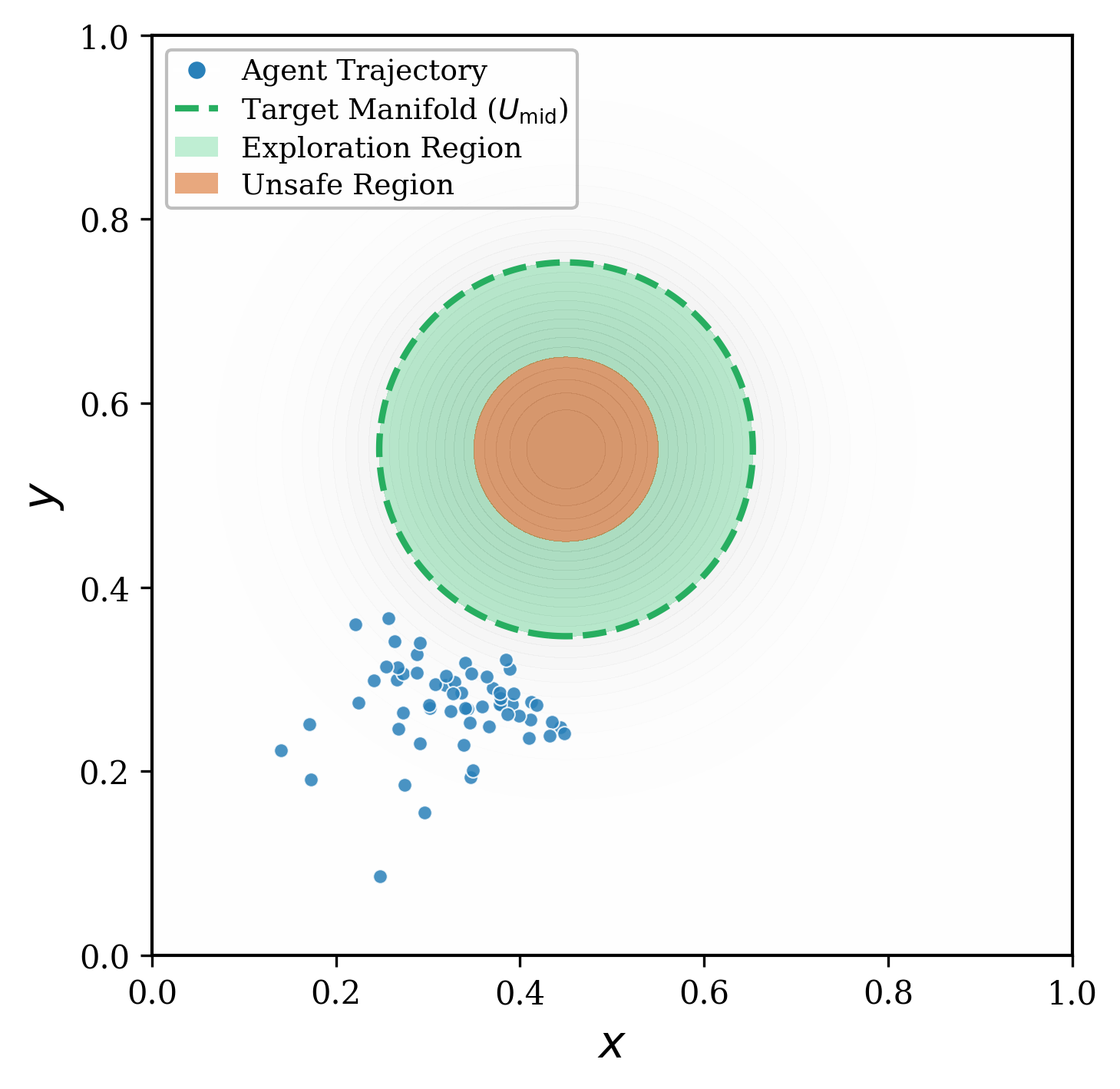} 
        \caption{Exploration behavior}
        \label{fig:exp_a_bump_method_particles}
    \end{subfigure}
    \hfill
    \begin{subfigure}[b]{0.3\textwidth}
        \centering
        \includegraphics[width=\textwidth]{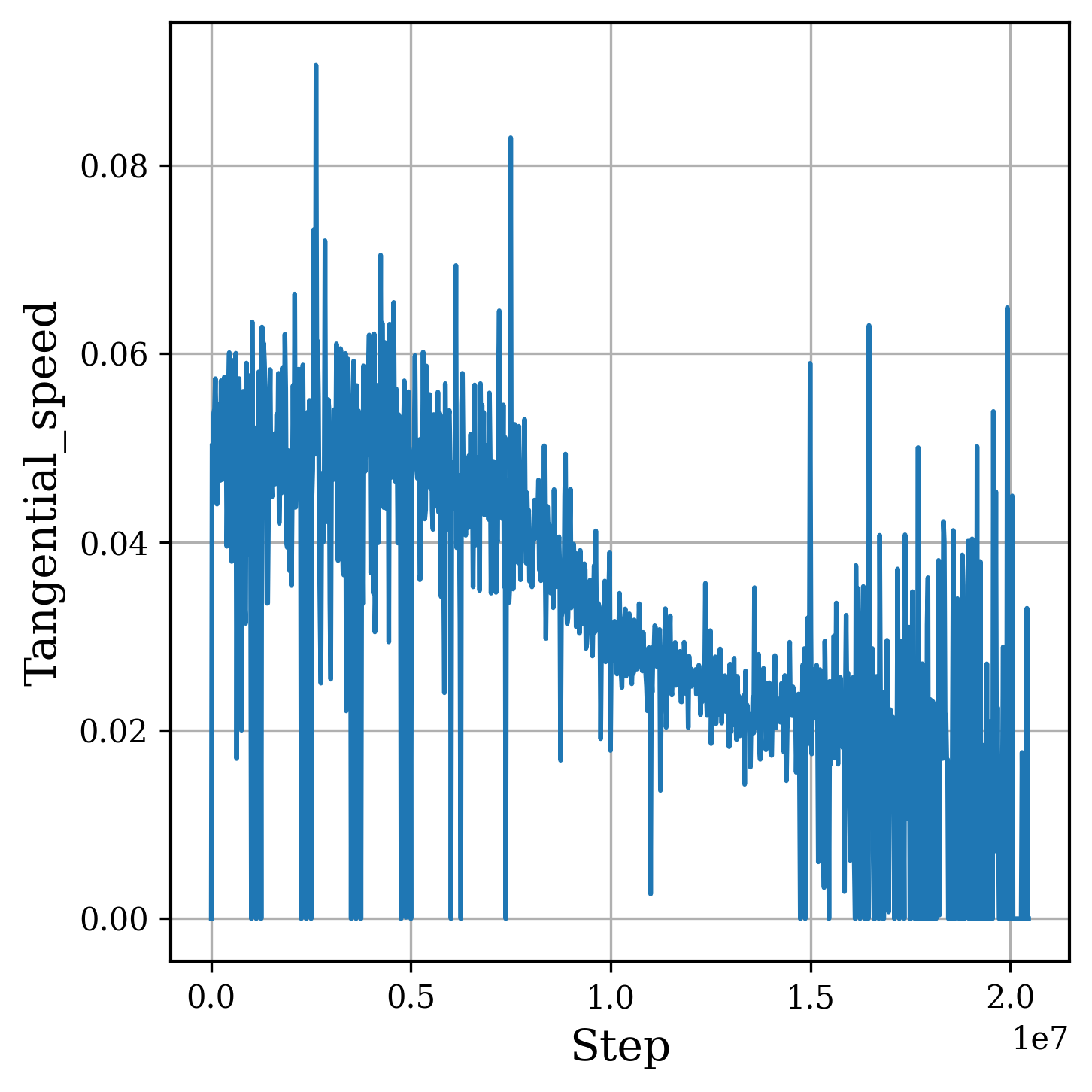}
        \caption{Speed on the Manifold}
        \label{fig:exp_b_bump}
    \end{subfigure}
    \hfill
    \begin{subfigure}[b]{0.3\textwidth}
        \centering
        \includegraphics[width=\textwidth]{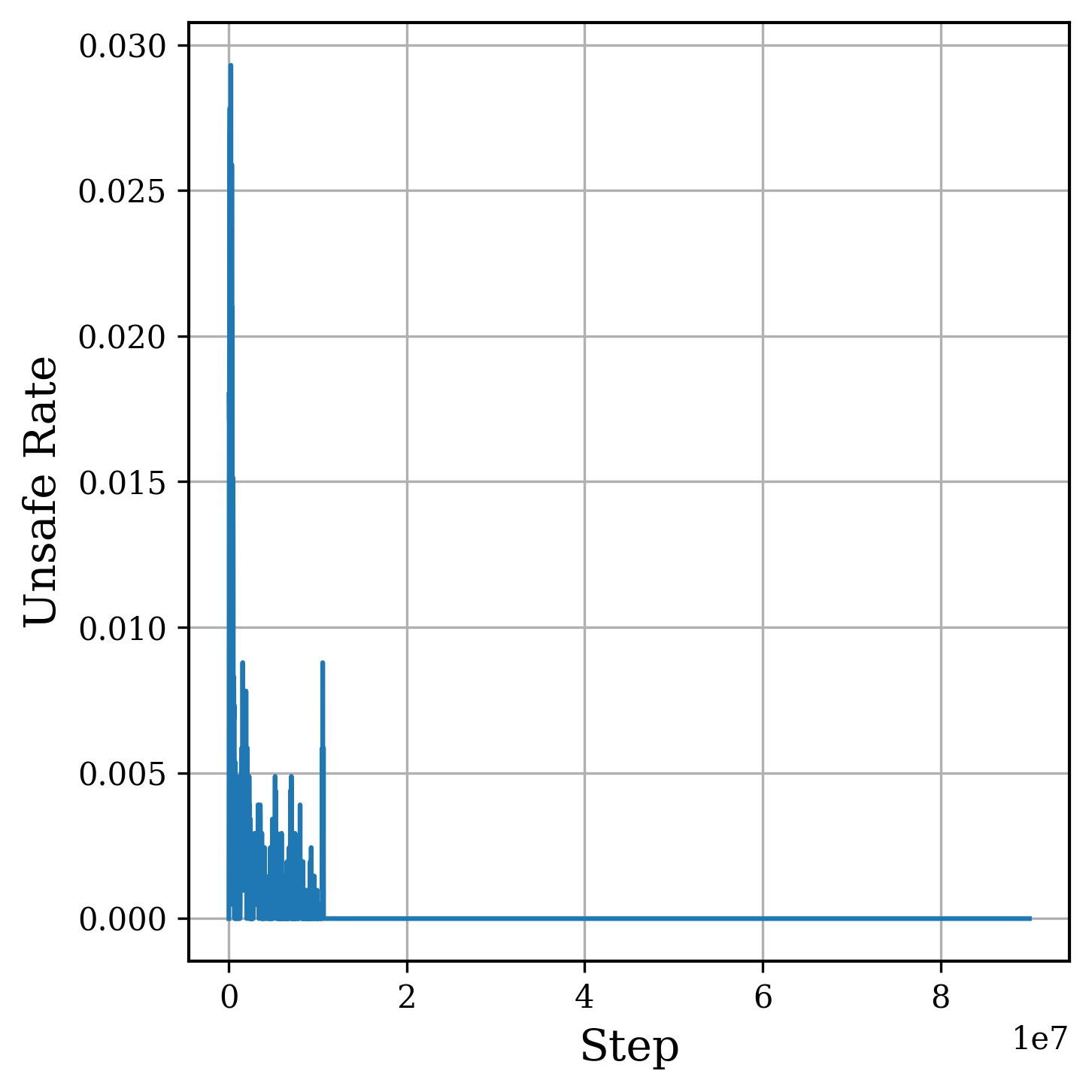}
        \caption{Unsafe Behavior Rate}
        \label{fig:exp_d_bump}
    \end{subfigure}
    \caption{ State-based intrinsic reward baseline demonstrating mode collapse. Without the rotational flow component, the agent's state visitations (blue points) collapse into a highly localized cluster on the target manifold $\mathcal{U}$ rather than continuously exploring the frontier.}
    \label{fig:four_experiments_bump_reward}
\end{figure*}

\begin{remark}[Challenges of State Marginal Matching for Stationary Deployment]\label{remark:smm_challenges}
To mitigate the parking behavior described above, the complete SMM framework introduces a correction term that subtracts the policy's own state visitation distribution from the reward, yielding an objective of the form $r(s) \propto \log p^*(s) - \log(\text{policy state visitation})$ \citep{lee2019efficient}. As discussed by \citet{lee2019efficient}, this quantity must be estimated from the data generated by the current policy and updated whenever the policy changes during training. Consequently, accurately maintaining this estimate requires continuous density estimation throughout online interaction. This requirement moves the problem outside the scope of standard reinforcement learning, since the reward function itself depends on the policy. Maintaining and updating such an estimator during an online deployment phase is computationally demanding and can introduce additional instability. Furthermore, achieving full coverage of the target distribution $p^*(s)$ in the SMM framework often requires training and deploying a mixture of policies (or ``experts''), which can be memory-intensive and difficult to scale to continuous control settings. In contrast, our vector-field reward introduces an explicit rotational component that encourages tangential motion along the manifold, enabling diverse boundary exploration with a single stationary neural network policy and without requiring online estimation of the policy's visitation distribution. That said, we emphasize that our reward shaping is not a universal solution and may have its own limitations. We refer the reader to Section~\ref{sect:limit} for a detailed discussion of potential limitations and open problems.
\end{remark}

\subsection{Balancing Exploration with Task Completion}
Finally, we evaluate the core motivation of our work: enabling a deployable agent to utilize excess time for safe data collection while still completing its primary task. In this scenario, we reintroduce the environment reward for reaching the goal region. To allow the agent to strategically manage its budget between exploration and goal-seeking, we augment its state input with time embeddings. Figure~\ref{fig:four_experiments_mainTask} compares the performance of policies trained using our vector-field reward design against those trained with the state intrinsic reward baseline. As shown in panel (a), our method exhibits a distinct time-splitting strategy: the agent initially collects diverse samples along the target manifold, and then, in the final phase of the horizon, runs toward the goal to secure the primary task reward. In contrast, the agent trained with the state intrinsic reward fails to achieve comprehensive coverage, merely visiting one of the uncertain regions on its direct path to the goal.

\begin{figure*}[t] 
    \centering
    \begin{subfigure}[b]{0.4\textwidth}
        \centering
        \includegraphics[width=\textwidth]{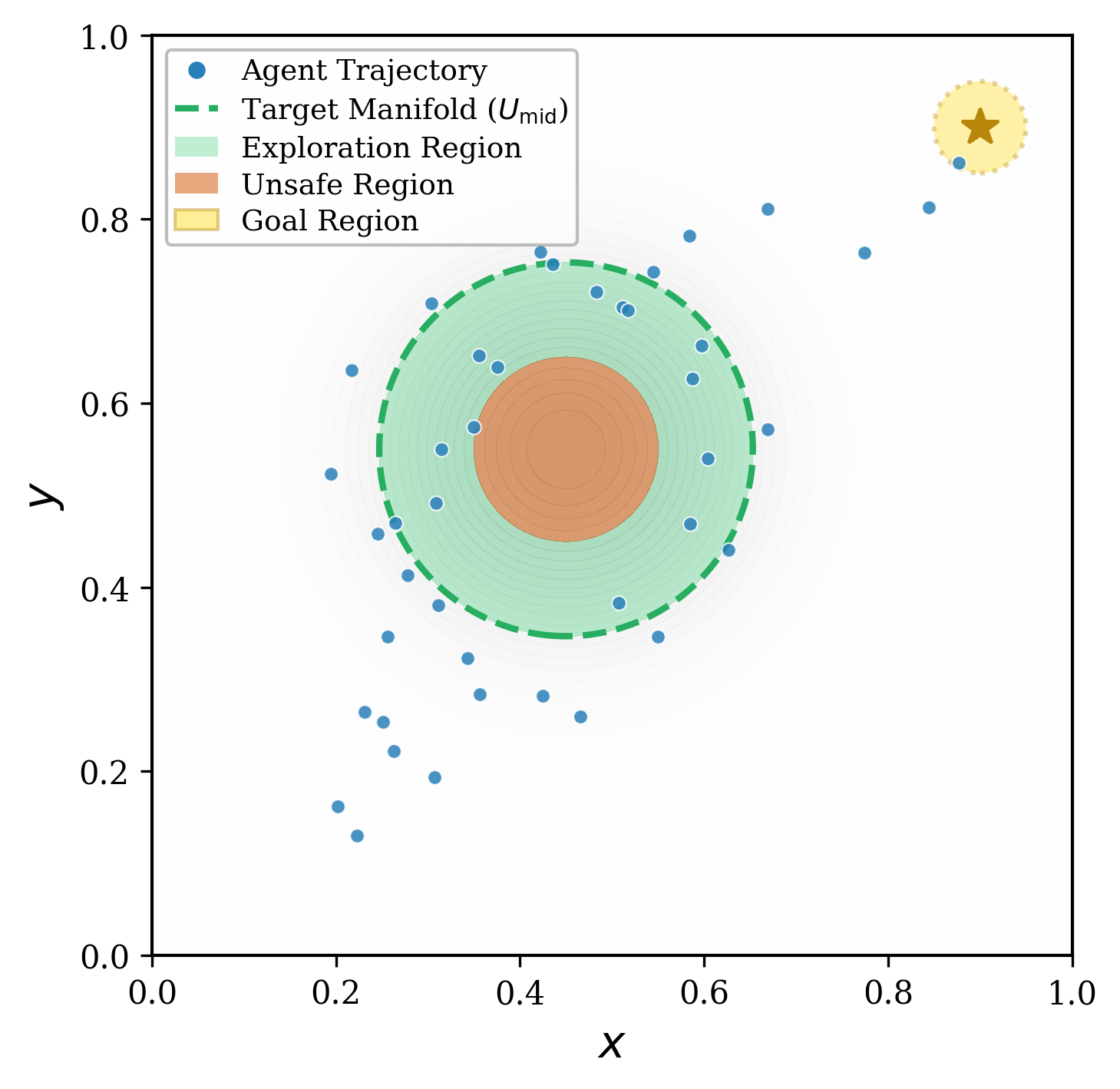} 
        \caption{Vector-field reward (Ours)}
        \label{fig:exp_a_vfMainTask}
    \end{subfigure}
    \hfill
    \begin{subfigure}[b]{0.4\textwidth}
        \centering
        \includegraphics[width=\textwidth]{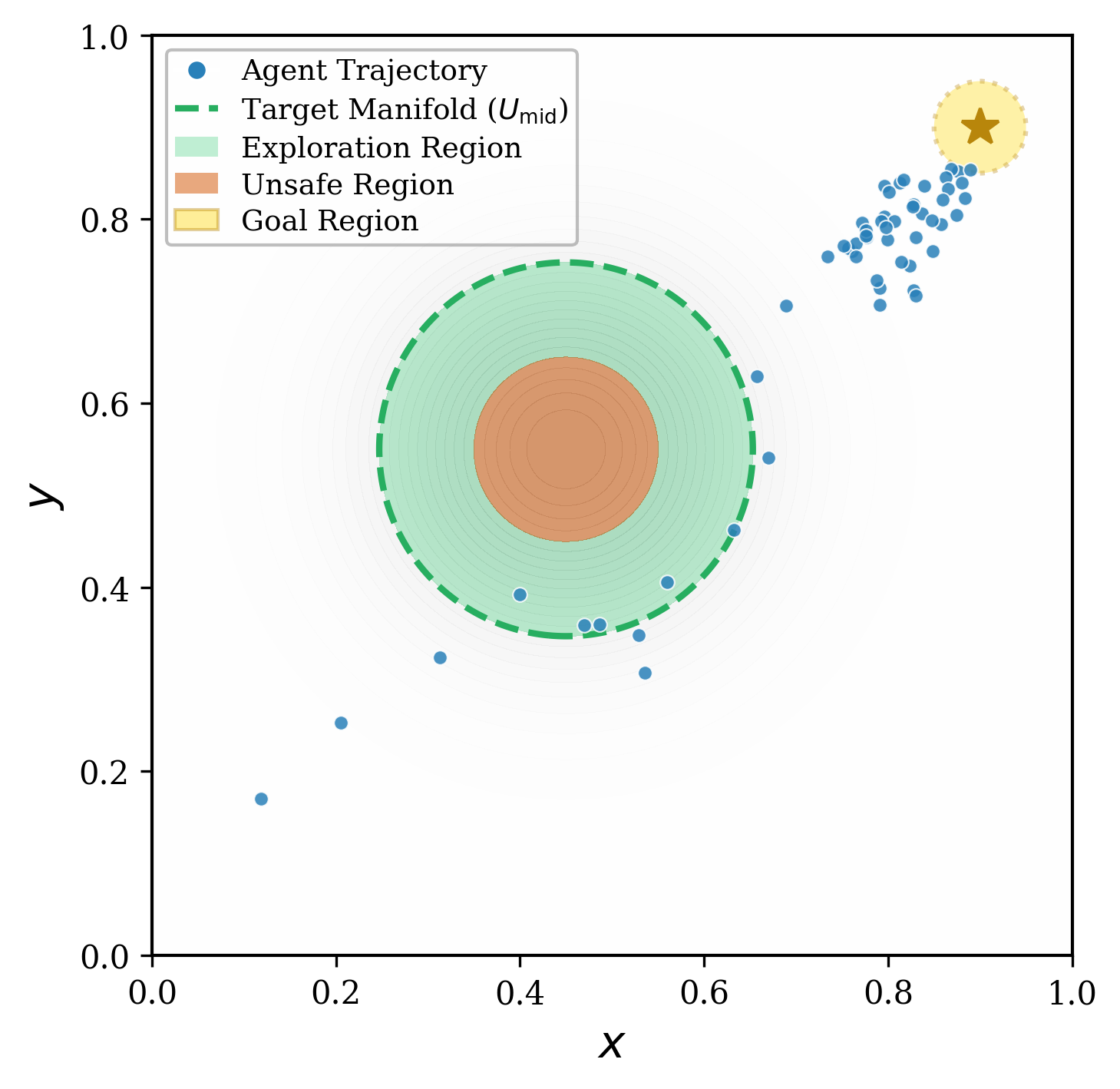}
        \caption{State intrinsic reward (Baseline)}
        \label{fig:exp_bump_mainTask}
    \end{subfigure}
    \caption{Balancing target manifold coverage with a primary navigation task. (a) Our method induces a time-splitting strategy, actively circulating the manifold before navigating to the goal. (b) The baseline intrinsic reward fails to achieve diverse boundary coverage, moving almost directly to the goal.}
    \label{fig:four_experiments_mainTask}
\end{figure*}

\subsection{Limitations and Future Work}
\label{sect:limit}
As established by our theoretical results (e.g., Corollary~\ref{cor:no_sticking}), our vector-field reward design successfully motivates the agent to continuously navigate along the target manifold $\mathcal{U}$. This highlights a crucial insight for boundary exploration: simply assigning high reward values to states residing on the manifold is insufficient, as it often leads to degenerate "parking" or stalling behaviors. Instead, active tangential movement is required to collect diverse, informative samples. However, the geometric nature of our approach introduces new challenges when scaling to higher-dimensional continuous control tasks. In a 2D state space, the target manifold is a 1D curve, meaning the rotational flow is essentially unique up to its direction (clockwise or counter-clockwise). Conversely, for $d > 2$, higher-dimensional manifolds lack a single canonical rotation \citep{spivak2018calculus}. Consequently, the skew-symmetric matrix $W$ in Eq.~\eqref{eq:rewardDesign} is highly underdetermined and can be chosen arbitrarily. As illustrated by the schematic in Figure~\ref{fig:vf_w}, different choices of $W$ induce completely distinct directional flows. A primary limitation of using a single, static $W$ matrix in higher dimensions is the risk of "orbit collapse." Even with our reward design, the agent might converge to rotating along a single, localized 1D orbit on the manifold rather than achieving comprehensive coverage of the entire $(d-1)$-dimensional surface. Addressing this topological limitation presents exciting avenues for future research. One potential approach is to combine a strong rotational flow with maximum entropy methods \citep{haarnoja2018soft}. Depending on the underlying geometry of the manifold, this noise injection could force the agent to traverse the boundary in a "zig-zag" pattern, naturally widening the exploration band. A more systematic solution could draw inspiration from the goal-conditioned RL literature\citep{nair2018visual, eysenbach2018diversity, ghugare2025normalizing}. By treating the matrix $W$ as a conditional input to the policy $\pi(a \mid s, W)$, we can train an agent capable of executing various rotational flows. During deployment, randomly sampling or systematically rotating through different $W$ matrices would generate intersecting search trajectories, maximizing surface coverage. Ultimately, while our proposed reward design does not universally solve the full distribution coverage problem addressed by State Marginal Matching \citep{lee2019efficient}, it offers a fundamentally different and tractable perspective. By leveraging $W$-conditioned geometric flows, future work could potentially bypass SMM's computationally heavy requirements for continuous online density estimation and game-theoretic policy mixtures, framing manifold exploration purely as a controllable dynamical system.

\begin{figure}[t]
    \centering
    \includegraphics[width=0.41\linewidth]{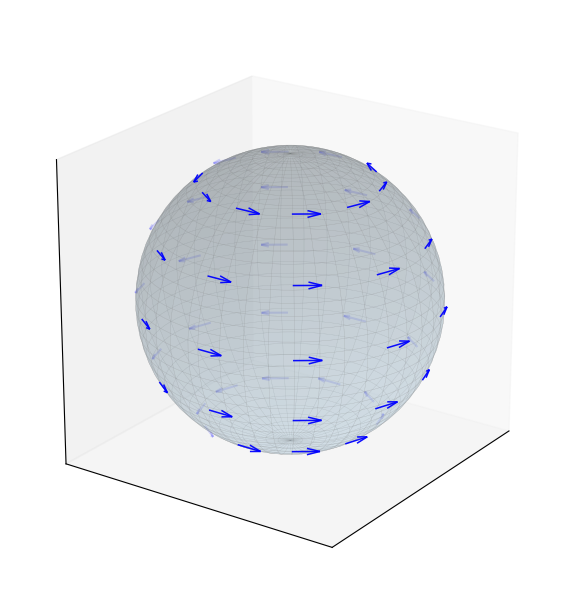}\hfill
    \includegraphics[width=0.41\linewidth]{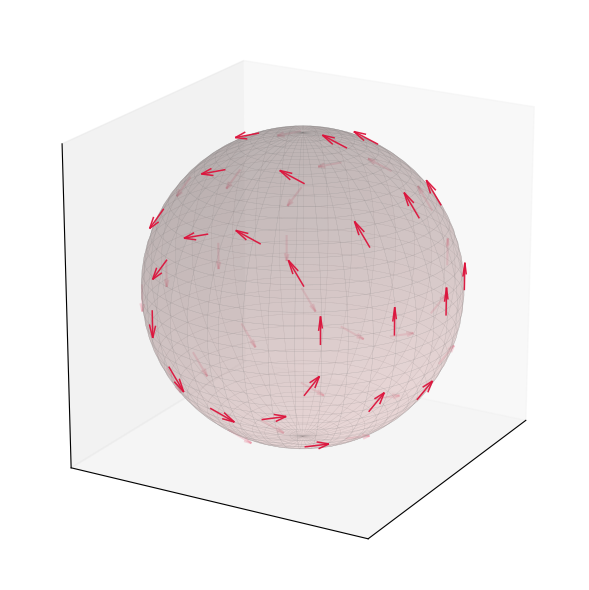}
    \caption{For higher dimensions ($d > 2$), our reward function in Eq.~\eqref{eq:rewardDesign} generates different tangential vector fields for different choices of the skew-symmetric matrix $W$. Because rotational flow is not unique in these spaces, each arbitrary $W$ dictates a distinct rotational trajectory along the boundary $\mathcal{U}$.}
    \label{fig:vf_w}
\end{figure}

\section{Conclusion}
In this work, we addressed the critical challenge of safely gathering informative out-of-distribution data during the deployment of pre-trained RL agents. By introducing a novel vector-field exploratory reward, we demonstrated that coupling gradient alignment with a rotational flow effectively compels a stationary policy to continuously navigate along target uncertainty boundaries. This geometric approach mitigates the degenerate ``parking'' behaviors associated with purely state-based intrinsic rewards and offers an alternative to relying on risky online policy updates. Ultimately, our framework provides a scalable, reliable mechanism for autonomous systems to balance primary task execution with safe frontier exploration, paving the way for robust iterative fine-tuning and safer sim-to-real transfer in continuous control domains.
\bibliography{main}
\bibliographystyle{rlj}

\beginSupplementaryMaterials
\section{Proof of Claim 1 from Theorem~\ref{thm:manifold_shaping_revised}}
\label{app:proof_claim1}
\textbf{Step 1: Rewrite the normal term via the smooth potential $\Psi$.}
For $U(s)\neq U_{\mathrm{mid}}$, we have $\nabla \Phi(s)=\mathrm{sign}(U(s)-U_{\mathrm{mid}})\nabla U(s)$.
Using the definition $\alpha(s) = -w(s)\mathrm{sign}(U(s)-U_{\mathrm{mid}})$, we find:
\[
\alpha(s)\nabla U(s) = -w(s)\,\nabla \Phi(s).
\]
By the chain rule, the gradient of our defined potential $\Psi(s) = \log\cosh(\Phi(s))$ is:
\[
\nabla \Psi(s) = \tanh(\Phi(s))\,\nabla \Phi(s) = w(s)\,\nabla\Phi(s).
\]
Combining these yields $\alpha(s)\langle \nabla U(s),\Delta_s\rangle = -\langle \nabla\Psi(s),\Delta_s\rangle$.

\textbf{Step 2: Convert to a telescoping difference (Proof of Claim 1).}
By the second-order Taylor expansion of $\Psi$ around $s$ evaluated at $s'=s+\Delta_s$, there exists $\xi$ on the segment between $s$ and $s'$ such that:
\[
\Psi(s') = \Psi(s) + \langle \nabla\Psi(s),\Delta_s\rangle + \frac{1}{2}\Delta_s^\top \nabla^2\Psi(\xi)\Delta_s.
\]
Rearranging this isolates our term from Step 1:
\[
-\langle \nabla\Psi(s),\Delta_s\rangle = -(\Psi(s')-\Psi(s)) + \varepsilon(s,a,s')
\]
where $\varepsilon(s,a,s') := \frac{1}{2}\Delta_s^\top \nabla^2\Psi(\xi)\Delta_s$. Because $\mathcal{S}$ is compact and $\Psi \in C^2$, the operator norm of $\nabla^2\Psi$ is bounded by $L_\Psi$, yielding $|\varepsilon(s,a,s')|\le \frac{L_\Psi}{2}\|\Delta_s\|_2^2$.

\section{Experimental Setup and Hyperparameters}

\subsection{Uncertainty Oracle Construction}

For our synthetic continuous control experiments, we model the uncertainty oracle $U(s)$ analytically to simulate a localized region of high uncertainty (representing out-of-distribution states where the simulator is unreliable). Given a 2D state $s = (x, y)$, we define the uncertainty landscape using a Gaussian function:

$$U(s) = A \exp\left(-\frac{(x - c_x)^2 + (y - c_y)^2}{2\sigma^2}\right)$$

where $A$ represents the maximum uncertainty amplitude, $(c_x, c_y)$ denotes the center coordinate of the uncertain region, and $\sigma$ controls the spatial spread of the uncertainty. 

\pagebreak
\subsection{Hyperparameters}

\begin{table}[h]
\centering
\caption{SAC and environment hyperparameters used in our experiments. All models were trained with a replay buffer capacity of $10^6$ transitions.}
\label{tab:hyperparameters}
\begin{tabular}{lll}
\toprule
\textbf{Category} & \textbf{Hyperparameter} & \textbf{Value} \\
\midrule

\multirow{7}{*}{Optimization}
& Actor Learning Rate & $1\times10^{-6}$ \\
& Critic Learning Rate & $5\times10^{-4}$ \\
& Alpha Learning Rate & $2\times10^{-3}$ \\
& Discount Factor ($\gamma$) & $0.99$ \\
& Soft Target Update Rate ($\tau$) & $10^{-4}$ \\
& Max Gradient Norm & $30.0$ \\

\midrule
\multirow{4}{*}{Training Loop}
& Batch Size & $256$ \\
& Replay Buffer Size & $1{,}000{,}000$ \\
& Action Limit & $0.1$ \\

\midrule
\multirow{3}{*}{Entropy}
& Target Entropy & $-8.0$ \\
& Initial Alpha ($\alpha$) & $0.1$ \\
& Initial Temperature & $1.0$ \\

\midrule
\multirow{2}{*}{Normalizing Flow}
& Number of Flow Blocks & $32-64$ \\
& Hidden Dimension & $256$ \\


\bottomrule
\end{tabular}
\end{table}

\subsection{Continuous Box World Environment}

To evaluate our proposed vector-field reward shaping, we implemented a custom, GPU-accelerated 2D continuous navigation environment natively in JAX. The environment is designed to explicitly model localized areas of missing data and unreliable simulator dynamics, simulating the out-of-distribution (OOD) challenges inherent in offline reinforcement learning.

\textbf{State and Action Spaces:} The state space consists of continuous 2D coordinates bounded within $s \in [0, 1]^2$. The agent begins each episode at a uniformly sampled position in the bottom-left region $[0.05, 0.2]^\top$. The action space is a continuous 2D vector, $\Delta_s$, representing the agent's step, which is strictly clipped to $[-0.1, 0.1]^2$.

\textbf{Base Dynamics and Reward:} In the nominal (safe) regions, the environment transitions are governed by the agent's action combined with a small Gaussian background noise ($\mathcal{N}(0, 0.01)$). The primary task is to navigate to a goal region of radius $0.05$ centered at $(0.9, 0.9)$. The base dense reward is proportional to the reduction in distance to the goal (scaled by $10.0$), minus a constant step penalty of $0.01$. Successfully reaching the goal yields a $+20.0$ bonus and terminates the episode. The maximum episode horizon is $60$ steps.




\end{document}